\documentclass[10pt]{article} 
\usepackage[preprint]{tmlr}

\usepackage[hang,flushmargin]{footmisc}
\usepackage{amsmath,amssymb,bm}
\usepackage{amsthm}

\newtheorem{lemma}{Lemma}
\newtheorem{proposition}{Proposition}

\newcommand{\M}{\mathcal{M}}
\newcommand{\R}{\mathbb{R}}
\newcommand{\E}{\mathbb{E}}

\DeclareMathOperator{\tr}{tr}
\DeclareMathOperator{\vecop}{vec}

\newcommand{\x}{\bm{x}}
\newcommand{\y}{\bm{y}}
\newcommand{\q}{\bm{x}_\mathrm{test}}
\newcommand{\task}{\bm{w}}
\newcommand{\e}{\bm{\varepsilon}}
\newcommand{\vm}{\bm{m}}
\newcommand{\vk}{\bm{k}}
\newcommand{\vv}{\bm{v}}

\newcommand{\iid}{\sim_{\mathrm{i.i.d.}}}
\usepackage{physics}
\newcommand{\Rtr}{R_\mathrm{tr}}
\newcommand{\btr}{\bm{b}_\mathrm{tr}}
\newcommand{\G}{\mathcal{G}}

\usepackage{hyperref}
\usepackage{url}
\hypersetup{colorlinks=true,allcolors=Periwinkle}
\usepackage[dvipsnames]{xcolor}
\usepackage{wrapfig}
\usepackage{graphicx}

\usepackage[most]{tcolorbox}
\definecolor{PlotPurple}{HTML}{6D5DF2}
\definecolor{PlotPurpleDark}{HTML}{635CB8}
\definecolor{PlotBlue}{HTML}{5E67D8}
\definecolor{PlotCyan}{HTML}{36B6D0}
\definecolor{PlotGreen}{HTML}{3FBE73}
\definecolor{AxisGrey}{HTML}{A0AAB8}
\definecolor{PanelGrey}{HTML}{F7F7FA}
\tcbset{
  resultbox/.style={
    enhanced,
    breakable,
    sharp corners,
    boxrule=0.55pt,
    colframe=AxisGrey,
    colback=PanelGrey,
    opacityback=1,
    boxsep=0pt,
    left=11pt,
    right=11pt,
    top=9pt,
    bottom=9pt,
    before skip=10pt,
    after skip=10pt,
    borderline west={1.6pt}{0pt}{PlotPurple},
    coltitle=PlotPurpleDark,
    fonttitle=\bfseries,
    detach title,
    before upper={
      \noindent{\color{PlotPurple}\bfseries\tcbtitle}\par
      \vspace{0.35em}
      {\color{AxisGrey!45}\hrule height 0.35pt}
      \vspace{0.75em}
    },
  }
}
\newtcbtheorem{lem}{Lemma}
  {resultbox}
  {lemma}
\newtcbtheorem{prop}{Proposition}
  {resultbox}
  {prop}
\usepackage[framemethod=TikZ]{mdframed}
\mdfdefinestyle{proofbarstyle}{
  skipabove=10pt,
  skipbelow=10pt,
  leftline=true,
  rightline=false,
  topline=false,
  bottomline=false,
  linewidth=2pt,
  linecolor=AxisGrey!35,
  backgroundcolor=white,
  innerleftmargin=10pt,
  innerrightmargin=0pt,
  innertopmargin=0pt,
  innerbottommargin=0pt,
  leftmargin=12pt,
  rightmargin=0pt,
}

\title{Sequential Correlations Change In-Context Learning: \\ Effective Context Length and Architectural Mismatch}

\author{\name Mary Letey \email maryletey@fas.harvard.edu \\
      \addr John A. Paulson School of Engineering and Applied Sciences, Harvard University \\
      Kempner Institute for the Study of Natural and Artificial Intelligence, Harvard University
      \AND
      \name Yue M. Lu \email yuelu@seas.harvard.edu \\
      \addr John A. Paulson School of Engineering and Applied Sciences, Harvard University
      \AND
      \name Cengiz Pehlevan\thanks{CP and JZ-V jointly supervised this work.} \email cpehlevan@seas.harvard.edu
      \\
      \addr John A. Paulson School of Engineering and Applied Sciences, Harvard University  \\
      Kempner Institute for the Study of Natural and Artificial Intelligence, Harvard University
      \AND
      \name Jacob Zavatone-Veth\footnotemark[1] \email jzavatoneveth@fas.harvard.edu
      \\
      \addr Society of Fellows and Center for Brain Science, Harvard University
      }

\def\month{06}  
\def\year{2026} 

\begin{document}

\maketitle

\begingroup
\renewcommand{\thefootnote}{}
\footnotetext{Code is available at \url{https://github.com/Pehlevan-Group/sequential-correlations-in-context-regression}.}
\footnotetext{A previous version of this work was presented at the \href{https://sites.google.com/view/hidimlearning/home}{4\textsuperscript{th} Workshop on High-dimensional Learning Dynamics (HiLD)} at ICML 2026 \citep{letey2026sequential}.}
\endgroup

\begin{abstract}
Modern sequence models have a striking capacity for in-context learning (ICL); they can perform new tasks based only on examples given in the prompt. Understanding how this ability emerges requires theory that captures important properties of natural data. Linear regression has served as a useful sandbox for ICL theory, but existing work has largely focused on prompts with independent examples. In this work, we extend this setting to sequentially correlated data, a basic feature of real sequences. We present a solvable model based on linear attention and test our predictions on realistic transformer architectures. We identify two distinct effects: First, when the query token is independent of the context, within-context correlations induce an effective context length: correlated prompts behave like shorter i.i.d. prompts. Second, when the query is also correlated with its context, test error is reduced, particularly for softmax attention when compared to linear attention. These results suggest that correlated prompts alter not only the effective sample size of in-context learning, but also which attention architectures are best matched to the task.
\end{abstract}

\section{Introduction}
In-context learning (ICL) is an important and useful capability of modern sequence models \citep{brown2020language, vonoswald2023transformers, wei2022emergent}. In this setting, a model performs a task implicitly from few examples without parameter updates. Regression has been a useful toy setting for studying how sequence models can achieve this ability. In-context regression requires a model to predict the label of a final query based on a prompt of input-output examples. The appeal of this setup is that it is simple enough to admit explicit analyses, yet rich enough to differentiate architectures and learning mechanisms \citep{akyurek2023what,vonoswald2023transformers,zhang2023doesincontextlearninglearn,lu2025asymptotictheoryincontextlearning,zhang2024incontext, oko2024pretrainedtransformerefficientlylearns,vasudeva2025transformers,letey2025pretrain}.

A standard simplifying assumption in the in-context regression literature is that the examples within a prompt are independent. This assumption is analytically convenient, but not reflective of real datasets \citep{shannon1948communication}. This has prompted the study of sequential correlations in other, non-regression in-context data settings, including Markov chains and dynamical systems \citep{edelman2024evolutionstatisticalinductionheads, nichani24a, cole2025incontextlearninglineardynamical, park2025competitiondynamicsshapealgorithmic}. For the regression setting, the effects of sequential correlations have yet to be analysed. We know that sequential correlations matter even for classical ridge regression, where they change the asymptotic risk and invalidate estimators designed for i.i.d.\ data \citep{atanasov2025risk}. Thus, it is natural to ask how sequential correlations affect in-context regression, particularly as attention architectures were designed to operate on structured sequences \citep{vaswani2017attention}.

In this work, we study a solvable model of in-context linear regression with sequentially correlated tokens. We begin from a reduced linear-attention model analysed in prior works \citep{zhang2023trained, wu2023pretraining, lu2025asymptotictheoryincontextlearning, letey2025pretrain, bordelon2025theoryscalinglawsincontext}, and perturb the usual i.i.d.\ setup by introducing a sequence correlation kernel for the context tokens, together with an optional correlation between the test query and the preceding context. This gives a controlled setting in which we can separate the effects of correlations \emph{within} the context, and correlations \emph{between} the query and the context.

\begin{figure}[t]
\centering
\includegraphics[width=\textwidth]{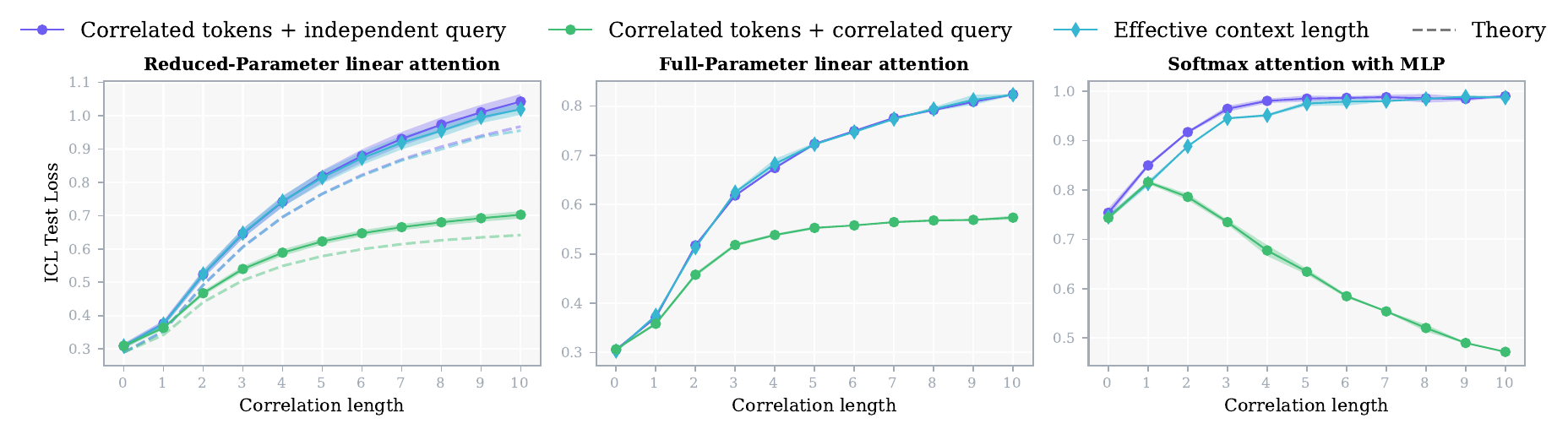}
\caption{ICL test loss against token correlation strength for three attention architectures. We highlight two findings. (1) ICL performance on correlated data with an independent query \textcolor{Periwinkle}{(purple)} is well approximated by uncorrelated data at an effective context length
\textcolor{CornflowerBlue}{(blue)} given by Eq. \eqref{eq:eff_len}. (2) When the query token is equivalently correlated with its corresponding context \textcolor{ForestGreen}{(green)}, ICL error is reduced in all models as they utilise this correlation for inference. These predictions are made from our theory curves \textcolor{gray}{(dashed)} derived for reduced-parameter linear attention. Experimental details given in Appendix \ref{sec:experiments}.}
\label{fig:compare}
\end{figure}

Figure \ref{fig:compare} summarises our findings. First, when correlations are confined to the context and the query remains independent, their effect is equivalent to an effective context-length reduction: correlated prompts behave like shorter i.i.d.\ prompts. Second, when the query token is correlated with its preceding context, ICL error decreases substantially, because the model can exploit this extra statistical structure to improve inference. Third, these correlated settings reveal an architectural mismatch: when additional query-dependent statistics enter, the performance gap between attention architectures (linear versus softmax in Figure \ref{fig:compare}) widens sharply.

An additional technical contribution is that we identify a regime in which strong sequential correlations lead to the breakdown of the high-dimensional treatment originally formulated in \cite{lu2025asymptotictheoryincontextlearning}. When correlations persist on a length scale that grows with the context length, the effective number of independent examples may no longer scale proportionally with dimension, and the moments of the linear attention estimator need not concentrate. 

\subsection*{Related Works}
Here we summarise key empirical and theoretical results from previous studies of in-context learning. The techniques and data structures considered in these past works motivate our study of in-context regression with sequential correlations. 

\paragraph{ICL in practice.} Early works showed that language models demonstrate an ability to learn new tasks, without weight updates, given a few examples provided ``in context'' \citep{radford2019language,brown2020language, chowdhery2022palmscalinglanguagemodeling, touvron2023llama, achiam2023gpt, bubeck2023sparksartificialgeneralintelligence}. These phenomena seemed to emerge with scale \citep{wei2022emergent, wei2023largerlanguagemodelsincontext} and to be robust to task complexity \citep{wei2022_chainofthought,deepseekai2025}. These observations led to much interest in understanding ICL \citep{dong2024surveyincontextlearning}, particularly regarding the question of how unknown tasks are solved at test time. Many empirical works attempted to answer this question, focusing on mechanisms \citep{vonoswald2023transformers,li2024languagemodelslearncontext, akyurek2023what, akyurek2024incontextlanguagelearningarchitectures, kirsch2024generalpurpose, olsson2022context, tong2024mlpslearnincontext, finegrained_data_arch_oymak} and data structures \citep{bietti2023birth,
chan2022data,
raventos2023pretraining,
garg2022what,
liu2025probingincontextlearningimpact,
singh2023transient,
goddard2025incontextlearninggeneralizetask,
Liu_2024,
vasudeva2025transformers,todd2026incontextalgebra} that enable ICL. 

\paragraph{ICL in theory.} Recent theoretical works have sought to answer similar questions about in-context learning, specifically, what algorithms could it be implementing, and what is the effect of data on its performance? Simple models, such as one- or two-layer networks and linear attention, have proven particularly useful testbeds for understanding what happens at inference time \citep{chandra2024towards, fu2023transformers,shen2025understandingincontextlearningstructured,gozeten2025testtimetrainingprovablyimproves,mahankali2023stepgradientdescentprovably,barnfield2026multilayercrossattentionprovablyoptimal,ahn2023transformers,zhang2025trainingdynamicsincontextlearning,sametDemir_2025_datamixing}. In particular, many theoretical works have been able to show through constructive approaches that transformers or simple architectures are able to implement general algorithms, such as gradient descent, to solve tasks in-context \citep{bai2023transformers, reddy2023mechanistic, elmoznino2025incontextlearningoccamsrazor, lee2025distinct,zhang2023doesincontextlearninglearn}. On the dataset side, many theoretical works have explicitly considered sequential structures, such as dynamical systems \citep{li2023transformers, cole2025incontextlearninglineardynamical} and Markov chains \citep{edelman2024evolutionstatisticalinductionheads,park2025competitiondynamicsshapealgorithmic,nichani24a}, in addition to more complex synthetic tasks \citep{tao2025howtransformerspredictpseudorandom,learning_to_grok_doshi_24,ye2026transformersprovablylearnalgorithmic,cole2026incontextoperatorlearningspace}. 

\paragraph{ICL in regression.} Moving away from complex, reasoning, and language-like tasks, regression has been a very rewarding sandbox for ICL research. By considering simpler functions, stronger results such as sample sufficiency \citep{oko2024pretrainedtransformerefficientlylearns}, gradient descent-like behaviour \citep{wu2023pretraining,zhang2023trained,zhang2024incontext,bordelon2025theoryscalinglawsincontext,fu2024transformerslearnachievesecondorder}, and even finite-sample error formulas \citep{lu2025asymptotictheoryincontextlearning,letey2025pretrain}, can be derived. This work builds on these to clearly understand the effect of sequential correlations on in-context learning of linear regression, as a probe of how different architectures are affected by such correlations. 

\section{Theory and Data Setup}\label{sec:main_paper_setup}
\textbf{In-context linear regression.} We study a correlated-token version of the solvable linear-regression ICL model induced by a reduced linear-attention block, studied by \citet{lu2025asymptotictheoryincontextlearning,zhang2023doesincontextlearninglearn}. A context consists of a sequence  
$\{ \x_1,y_1,\ldots,\x_\ell,y_\ell,\q\}$ with 
\[
y_a = \x_a^\top\task + \varepsilon_a,\qquad y_{\mathrm{test}} = \q^\top \task + \varepsilon_{\mathrm{test}},
\]
for task vector $\task\in\R^d$, and independent Gaussian noise $
\e\sim\mathcal N(0,\rho I_\ell),\, \varepsilon_{\mathrm{test}}\sim\mathcal N(0,\rho).$ The goal is for the model's output on this sequence to be close to $y_\text{test}$, thus performing the correct regression inference given examples in the context. Throughout, we write $X\in\mathbb{R}^{\ell\times d}$ for the matrix whose $a$th row is $\x_a^\top$.

\textbf{Sequential correlations.} We model sequential structure within the context as linear dependence between the tokens, following previous work in the ridge regression setting \citep{atanasov2025risk,moniri2025linearly}. Under this model, the mean-zero Gaussian tokens $\x_1\,,\ldots\,,\,\x_\ell$ have second moments given by 
\[
\E[\x_a\x_b^\top] = K_{ab}I_d/d, \qquad \E[\q\x_a^\top] = \bm{k}_\text{test}\,_a I_d/d
\]
\textit{i.e.}, $K\in\R^{\ell\times \ell}$ controls correlations across context positions, and the feature-feature covariance is given by $I_d/d$. The case of independently sampled tokens is described by $K = I_\ell$. The query correlations are controlled by $\vk_{\mathrm{test}} \in \mathbb{R}^\ell$ which specifies how strongly the query is coupled to each preceding context token; for an independent query, $\vk_\mathrm{test} = \bm{0}.$

\textbf{Correlation summaries and correlation length.} 
We normalise such that $\tr(K)=\ell$. The theory depends on $K$ and $\vk_{\mathrm{test}}$ through the summary statistics
\[
k_0:=\vk_{\mathrm{test}}^\top \vk_{\mathrm{test}},\qquad
k_1:=\vk_{\mathrm{test}}^\top K\vk_{\mathrm{test}}, \qquad k_2 := \tr(K^2)/\ell.
\]
For general $K$ and $\bm{k}_\text{test}$, we can think of $k_0, k_1, k_2$ as measures of token correlation strength. 
The quantity $k_2$ is a bulk measure of context correlations: under our chosen normalisation
$\tr(K)=\ell$, it is minimised by the uncorrelated case $K=I_\ell$, where $k_2=1$, and increases
as the spectrum of $K$ becomes more anisotropic or concentrated. The query-dependent quantity $k_0$ measures the overall strength
of the direct query-context coupling, while the mixed quantity $k_1$ measures how strongly
that query-correlation profile is amplified by the correlated structure of the context itself.

These quantities become more intuitive if we take a specific kernel as an example: the exponential kernel $K_{ab} = \exp(-|a-b|/\xi)$ and $\bm{k}_\text{test}\,_a = \exp(-(\ell+1 - a)/\xi).$ Because $K$ and $\bm{k}_\text{test}$ only depend on \textit{distances} between tokens, the natural parameter of correlation ``strength'' is correlation \textit{length} $\xi$. This case provides an intuitive sandbox, as the ICL summary statistics above are directly related to the correlation length by $
k_2, k_0 \approx \xi,\,
k_1 \approx \xi^2.$ We thus choose to use the exponential kernel for the experiments in all figures, and save a more detailed analysis with non-exponential kernels to future work.

\textbf{Reduced linear-attention predictor.}
Following prior work on linear attention, the next-token prediction for $y_\text{test}$ made by linear attention can be well-approximated by 
\[
\hat y_{\mathrm{test}}:=\tr(\Gamma H^\top),
\quad
\Gamma\in\R^{d\times (d+1)}, \quad 
H
:=
\q
\begin{bmatrix}
\frac{d}{\ell}(X\task+\e)^\top X
&
\frac{1}{\ell}(X\task+\e)^\top(X\task+\e)
\end{bmatrix}.
\]
The parameter matrix $\Gamma$ here is defined in terms of components of the value, key, and query matrices from full-parameter linear attention. This is the predictor that we will study in the theory, and is the ``reduced model'' referred to on the leftmost panel of Figure \ref{fig:compare}. 

Given a training batch of $n$ sequences, corresponding to $n$ such data matrices $H$ for the reduced model, the optimal parameters and corresponding test loss can be written as \begin{align*}
    \Gamma^* = \qty(\frac{n}{d}\lambda I_{d(d+1)} + \sum_{\mu=1}^n H^\mu \otimes H^\mu)^{-1}\sum_{\mu=1}^n y_\text{test}^\mu H^\mu\,, \quad \mathcal{E}_\text{ICL}(\Gamma^*) = \E[(y^\text{new}_\text{test} - \langle H^\text{new}, \Gamma^*\rangle)^2].
\end{align*}

\textbf{High-dimensional scaling.}
We study the matrix $\Gamma^*$, and functions of it, in a high-dimensional limit using random-matrix techniques as in \cite{lu2025asymptotictheoryincontextlearning}. We work in the standard proportional regime, where $d \to \infty$ with $\ell \propto d$. We assume the task signal remains identifiable, e.g. $
\task^\top\task = \Theta(d).$ 

\begin{wrapfigure}{r}{0.58\textwidth}
    \centering
    \includegraphics[width=\linewidth]{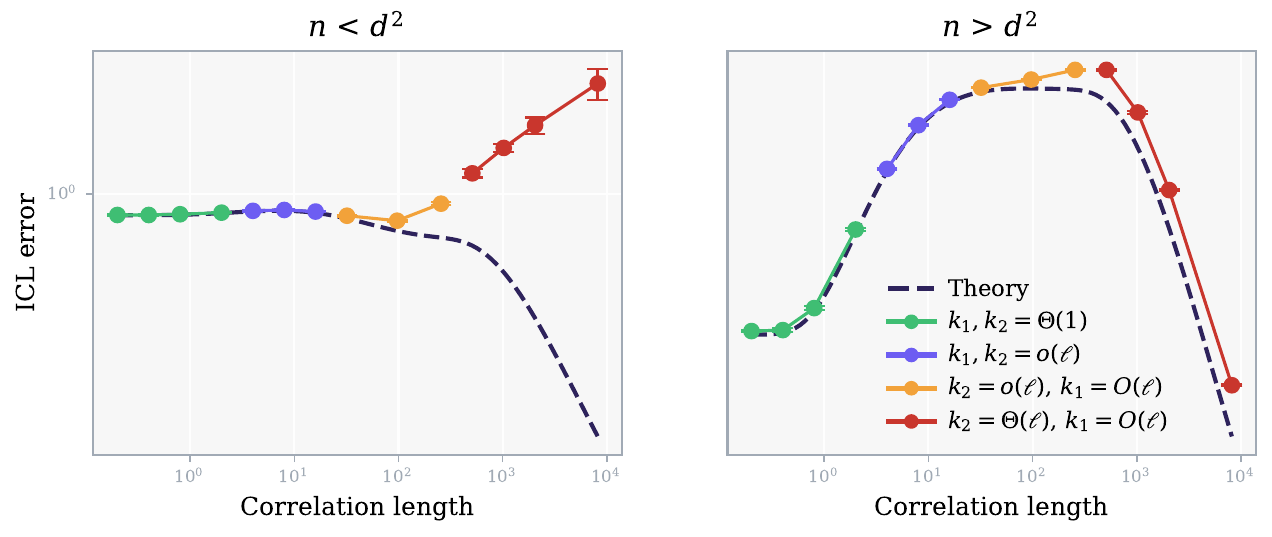}\vspace{-1em}
    \caption{Simulations of $\mathcal{E}_\mathrm{ICL}(\Gamma^*)$ for several scaling regimes, compared with theory (Prop. \ref{prop:icl-error-formula-constant}).}
    \label{fig:three_regimes}
    \vspace{-1em}
\end{wrapfigure}
We analytically study the case of \textit{weak correlations} with respect to context length $\ell$: for the computation of theory curves we take $k_2,k_0,k_1 = \Theta_\ell(1)$. We do not study stronger correlations in this work, corresponding to cases such as $k_2 = o(\ell)$ or $k_2 \propto \ell$, and do not attempt to characterise all of the ways in which the theory's predictions break down. In stronger-correlation regimes, concentration of the parameter matrix $\Gamma^*$ and its corresponding ICL error can fail (see Figure \ref{fig:three_regimes}), and a full characterisation is left for future work, with some further discussion in Section \ref{sec:effective_context_section} and Appendix \ref{sec:nonconcentration}.

\section{Effective in-context sample size}\label{sec:effective_context_section}
Working in the proportional limit described above, we can analyse the random parameter matrix $\Gamma^*$. The full result is given in Appendix \ref{sec:actual_formula}. We present its first implication.

\begin{tcolorbox}[resultbox,title={Effective context length formula}]
Consider tokens $\x_1,\ldots,\x_\ell$ that are weakly correlated, \textit{i.e}. $k_2=\Theta(1)$, and an independent query $\q$, \textit{i.e}. $\bm{k}_\mathrm{test}=\bm{0}$. Then the correlated-ICL performance at context length $\ell$ matches the uncorrelated-ICL performance at the lower effective context length \begin{equation}\label{eq:eff_len} \ell_\text{eff} := \frac{1+\rho}{k_2+\rho}\ell,\end{equation} where by Jensen's inequality $k_2 = \tr(K^2)/\ell \geq \qty(\tr(K)/\ell)^2 = 1$, so $\ell_\mathrm{eff} \leq \ell.$
\end{tcolorbox}

Eq (\ref{eq:eff_len}) makes precise how sequential correlations reduce the usable information content of the prompt. Since $k_2$ increases as the context covariance
becomes more anisotropic, stronger or more persistent correlations decrease the effective context length $\ell_{\mathrm{eff}}$. For stationary kernels, where $k_2$ grows with the correlation scale, this recovers the intuitive statement that longer-ranged correlations make fewer context examples effectively independent. The simplicity of the resulting formula is itself notable: the reduction depends linearly on the single scalar $k_2$, rather than on the full
spectrum of $K$. This contrasts with classical correlated linear regression, where test error typically depends more delicately on the spectrum, and sometimes the eigenvectors, of the design covariance.

Eq (\ref{eq:eff_len}) also hints why the strongly-correlated regime is analytically difficult.
Our theory here assumes that the correlation summary $k_2$ remains $\Theta(1)$ as
$\ell,d\to\infty$ with $\ell\propto d$. But once correlations persist across more and more of the context, we instead have either $k_2=o(\ell)$ or $k_2 \propto \ell$. In this regime, $\ell_{\mathrm{eff}}$ no longer scales proportionally with $\ell$, but remains only $o(\ell)$ or $\Theta(1)$ respectively. Thus, even as the nominal context length increases, we are no longer in the standard linear-regression regime where samples are proportional to dimension. We discuss this further in Appendix \ref{sec:nonconcentration}, as these regimes are where the standard high-dimensional concentration underlying our asymptotic analysis may break down.

\section{Gain from query correlations}
\begin{wrapfigure}{r}{0.5\textwidth}
    \centering
    \vspace{-0.4cm}
    \includegraphics[width=0.5\textwidth]{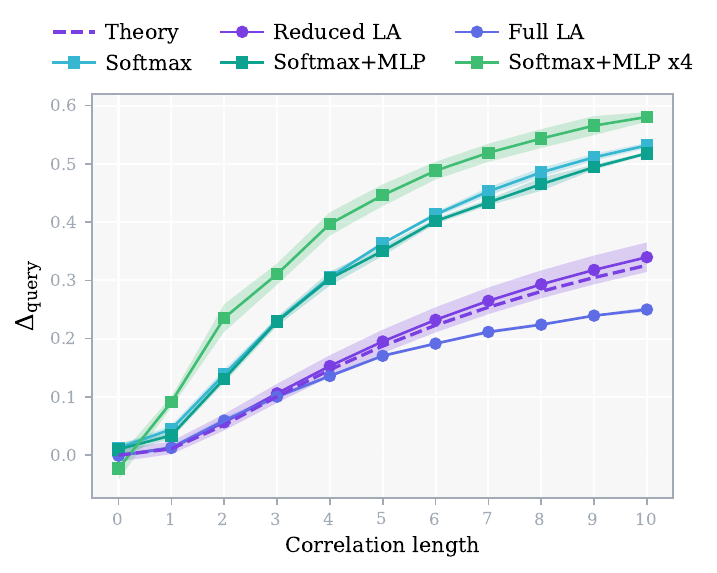}
    \caption{Query effects depend on architecture.}
    \label{fig:delta_query}
\end{wrapfigure}

We now turn to the second effect in Figure \ref{fig:compare}: correlations between the test query and its preceding context can provide additional predictive signal and thus reduce the ICL error. To isolate this contribution, define the \emph{query-correlation gain} as
\[
\Delta_{\mathrm{query}}
:=
\mathcal{L}_{\mathrm{ICL}}(\ell, k_2\,; \, \bm{k}_\text{test}=0) - \mathcal{L}_\text{ICL}(\ell, k_2, k_0, k_1),\]
namely, the reduction in ICL error obtained by introducing query-context correlations while holding the bulk context kernel $K$ fixed. Here $\mathcal{L}_\mathrm{ICL}$ denotes the ICL error of a model (not necessarily linear attention) on this correlated linear regression task.

In Figure \ref{fig:delta_query}, we find that $\Delta_{\mathrm{query}}$ is nonnegative (up to finite-sample fluctuations) and vanishes in the uncorrelated case $K=I_\ell$. Thus, unlike bulk context correlations, which reduce effective sample size and hurt ICL performance, query correlations provide additional predictive signal that can in principle be exploited for inference.

In our theory, this effect is controlled not only by the bulk statistic $k_2$, but also by the query-dependent summaries $k_0$ and $k_1$, which increase with stronger correlations. The resulting closed-form expression for $\Delta_{\mathrm{query}}$ is substantially more complicated than the effective-sample law in eq. (\ref{eq:eff_len}); we therefore omit it from the main text. Empirically, $\Delta_{\mathrm{query}}$ is relatively modest for linear-attention architectures but much larger for architectures that use softmax attention. This suggests that softmax attention is better matched to sequential inference, as it can form query-dependent weights over the most informative tokens rather than compressing correlations into the second-order summaries available to linear attention.

\section{Conclusion} 
This work is a first step toward a theory of ICL for structured, sequential data. By separating bulk context correlations from query-context correlations, we show that prompt structure can affect ICL in two qualitatively different ways: bulk context correlations reduce effective sample size, while query-context correlations provide useful signal for inference, highlighting an architectural mismatch between linear and softmax attention. The latter effect is especially important: softmax attention appears to have a distinctive ability to exploit sequential information, which is not captured by the linear-attention theory. Developing theory that explains this advantage, and extending it beyond linear regression to richer sequential and dynamical tasks, will be necessary for understanding ICL in more realistic data regimes.

\section*{Acknowledgements}
M.L. is supported by a Graduate Fellowship from the Kempner Institute for the Study of Natural and Artificial Intelligence. Y.M.L. gratefully acknowledges support from a Harvard College Professorship, the Harvard FAS Dean’s Fund for Promising Scholarship, and DARPA grant DIAL-FP-038. C.P. was supported by an NSF CAREER Award (IIS-2239780), DARPA grants DIAL-FP-038 and AIQ-HR00112520041, the Simons Collaboration on the Physics of Learning and Neural Computation, and the William F. Milton Fund from Harvard University. J.Z.-V. was supported by a Junior Fellowship from the Harvard Society of Fellows. This work has been made possible in part by a gift from the Chan Zuckerberg Initiative Foundation to establish the Kempner Institute for the Study of Natural and Artificial Intelligence.

\newpage
\bibliographystyle{tmlr}
\bibliography{refs}

\newpage
\appendix
\section{Experimental details}\label{sec:experiments}
The data structure follows the Section \ref{sec:main_paper_setup} setup. For all experiments we choose a stationary kernel, \textit{i.e.}, distance-dependent correlations, given by \begin{equation}\label{eq:exponential_bulk_kernel} K_{ab} := \exp(-|a-b|/\xi).\end{equation} Here the correlation length $\xi$ is manifestly part of the definition of $K$; when $\xi = 0$, $K = I_\ell$. This definition is convenient due to the clarity of its correlation length parameter, as well as the fact that our Gaussian model with this kernel is equivalent to an AR(1) process which allows for efficient sampling. When query correlations are present, \textit{i.e.}, $\bm{k}_\text{test} \neq \bm{0}$, we choose matching structure \begin{equation}\label{eq:exponential_query_kernel}(\bm{k}_\text{test})_a := \exp(-|\ell+1-a|/\xi).\end{equation} This parameter $\xi$ is what is referred to on the $x$-axes of all figures.

\paragraph{Data parameters.} A batch of data contains sequences of length $\ell$ with one query each. We train in an offline manner with $n$ total sequences. Each sequence is defined by a task vector $\task$. As in \cite{lu2025asymptotictheoryincontextlearning, letey2025pretrain, raventos2023pretraining}, the total number of task vectors in the pretraining batch may differ from $n$. The total number of unique task vectors in the batch is called $k$, and each unique vector is sampled i.i.d. from $\mathcal{N}(\bm{0}, I_d).$ For all experiments, we take $$d = 32\,, \quad n = 4096 = 4d^2\,, \quad \ell=128=4d\,, \quad k = 320 = 10d$$ matching the asymptotic scalings of these variables derived in \cite{lu2025asymptotictheoryincontextlearning}. We take label noise $\rho = 0.01$. 

\paragraph{Architecture details.} The experiments train sequence models on the above linear regression sequence data with sequentially correlated inputs. The architectures considered include softmax-only layers, softmax+MLP transformer-like architectures, and pure linear attention. Note that we distinguish between the reduced model (where the optimal solution is given directly by $\Gamma^*$) from \textit{trainable} fully-parameterised linear attention. Softmax models embed inputs into width $d+1$ with a dedicated label channel, initialise input embeddings with scale proportional to $\sqrt{(d+1)}$, initialise query/key maps on the input channels, and initialise value/output maps to primarily use the label channel with small Gaussian noise. MLP blocks use GELU and are initialised close to zero. Linear-attention models operate directly on concatenated $(\bm{x}_i, y_i)$ tokens of dimension d+1, with query/key/value matrices initialised near identity. Models are trained by minimising mean-squared error on the held-out query label only. Optimisation uses Optax AdamW-style updates with weight decay \texttt{lamb}; the total number of gradient steps is set to some fixed $T$ (large enough to reach training loss convergence), but we employ early stopping and report the test loss of the best checkpointed model. We use a linear learning-rate warmup from 0 to \texttt{max\_lr} over the first 10\% of training, with learning rate remaining constant after warmup. We use exponential moving average weights for evaluation. Minibatching is used to divide the full dataset of $n$ sequences; we find that smaller batch sizes perform better for softmax attention architectures. Figures \ref{fig:compare} and \ref{fig:delta_query} show mean best-test loss over different seeds, where seeds control train data batch, testing samples, and batch divisions. A summary of our training hyperparameters is $$\texttt{max\_lr} = 0.001, \quad \texttt{lamb}=0.0001, \quad T=10000, \quad \text{batch size} = 64 = 2d, \quad \text{\# seeds}=5.$$

\section{Detailed setup}
Here we reiterate the data and model setup given in the main document with more details. We leave the token design matrix general, \textit{i.e.}, $$\x_a \sim_K \mathcal{N}(0,\Sigma/d)$$ where the sample-sample correlations are given by positive semi-definite $K$ as \begin{equation}\label{eq:secondmoment} \E_X[X_{ai}X_{bj}] = \frac{1}{d}K_{ab}\Sigma_{ij} \qquad \text{for } a,b\in[\ell] \text{ and } i,j\in[d].\end{equation} The label noise we take to be sequentially uncorrelated   
$$\e \sim \mathcal{N}(0,\rho I_{\ell+1}) \qquad \text{independent of all $X$ and $\q$}.$$ 
The query $\q$ may be correlated with the contexts as \begin{align*} 
\E[\q |X] = X^\top \bm{m}\,, \qquad \text{Var}(\q |X) = \frac{1}{d} (1-k_{-1})\Sigma
\end{align*} for   \begin{align*}
    \bm{m} \equiv K^{-1} \bm{k}_\mathrm{test} \in\mathbb{R}^\ell\,, \qquad k_{-1} \equiv \bm{k}_\mathrm{test}^\top K^{-1} \bm{k}_\mathrm{test} \leq 1.
\end{align*} This is equivalent to sampling, as in eq. (\ref{eq:secondmoment}), using $\ell+1 \times \ell+1$ positive-semidefinite correlation kernel $$K_\text{query} = \begin{bmatrix}
    K & \bm{k}_\text{test} \\
    \bm{k}_\text{test}^\top & 1 
\end{bmatrix}.$$ As a final note, this entire data setup is equivalent to $$X = \frac{1}{\sqrt{d}}\sqrt{K} Z \sqrt{\Sigma}\,, \qquad Z_{si} \sim_\mathrm{i.i.d.} \mathcal{N}(0,1).$$

\subsection{Notation and Assumptions} 
We will consider a standard high-dimensional proportional limit, as in \citep{letey2025pretrain,lu2025asymptotictheoryincontextlearning}. The context length scales as $$\alpha \equiv \frac{\ell}{d} = \Theta_{d,\ell}(1).$$
The usual high-dimensional assumptions on $\Sigma$ are such that $$\tr(\Sigma),\, \tr(\Sigma \task\task^\top) = \Theta(d).$$ These are a succinct summary of conditions on both $\Sigma$ and $\task$. The first condition $\tr(\Sigma) = \Theta(d)$ ensures that there is reasonable signal to estimate in the tokens; note that after this section we will always assume $\Sigma = I_d$. The second condition $\tr(\Sigma\task\task^\top) = \Theta(d)$ is equivalent to $\|\task\|^2 = \Theta(d)$ ensuring that there is enough recoverable task signal. Given these assumptions, everything will be written in normalised $\Theta_d(1)$ quantities \begin{align*}
    \tr[\Sigma] := \frac{1}{d}\tr(\Sigma)\,, \qquad \tr[\Sigma \task\task^\top] := \frac{1}{d}\tr(\Sigma \task\task^\top).
\end{align*} 
These need to be extended to assumptions about $K$. We will take \begin{align*}
    \tr[K] = \frac{1}{\ell}\tr(K) = 1
\end{align*}
The final thing to reason about is the contributions from $K^2$ and from $\bm{k}_\mathrm{test}$. This is where the notion of ``correlation length'' captured in $K$ and $\bm{k}_\mathrm{test}$ becomes important, as the terms \begin{align*}
    \tr(K^2)\,, \quad k_{-1}\equiv \bm{k}_\mathrm{test}^\top K^{-1}\bm{k}_\mathrm{test} \,,\qquad k_0 \equiv \bm{k}_\mathrm{test}^\top \bm{k}_\mathrm{test}\,,\qquad k_1 \equiv \bm{k}_\mathrm{test}^\top K \bm{k}_\mathrm{test}
\end{align*} will appear in the computation. 

\paragraph{Example: the exponential kernel} It is often convenient and natural to choose $K$ to be PSD and Toeplitz, \textit{i.e.}, $$K_{st}=c(|s-t|).$$ This means the correlations between $\x_s$ and $\x_t$ are stationary: they only depend on their relative distance in the sequence. A particularly nice example of such a matrix is one with $$c(\tau) = \exp(-\tau/\xi).$$ As we've seen from Figure \ref{fig:three_regimes}, there are various cases that may occur. 
\begin{itemize}
    \item Finite correlations: $\xi = \Theta_\ell(1).$ Then $k_0,k_1,k_2$ = $\Theta(1)$.
    \item Subleading correlations: Have $\xi = o(\ell)$. If $\xi^2 = o(\ell)$, then $k_0,k_1,k_2$ = $o(\ell)$; else, have $k_0,k_2 = o(\ell)$, $k_1 = O(\ell)$.
    \item Proportional correlations: $\xi \propto \ell$. Then $k_0,k_2 = \Theta(\ell)$, $k_1 = \Theta(\ell^2)$. 
\end{itemize}

\section{Population loss}\label{sec:populations}
We will consider linear features \begin{align*}
    \bm{y} &= X\task + \e\,, \quad y_\mathrm{test} = \q^\top \task + \mathcal{N}(0,\rho).
\end{align*}
For each such sequence of $X, \y, \q, y_\mathrm{test}$, define a data matrix $$H = \q\begin{bmatrix}
    \frac{d}{\ell}({X}\task + \e)^\top {X} & \frac{1}{\ell}({X}\task + \e)^\top({X}\task + \e)
\end{bmatrix}\in \mathbb{R}^{d\times (d+1)}.$$ 
The predictor we will use for $y_\mathrm{test}$ is given by $$\hat{y}_\mathrm{test} = \langle \Gamma, H \rangle = \text{tr}\qty(\Gamma H^\top)$$ for parameters $\Gamma \in \mathbb{R}^{d\times (d+1)}.$ Before we consider ICL error at optimal parameters $\Gamma^*$, we can compute a population formula for $$\mathcal{E}_\text{ICL}(\Gamma) = \E_\text{new data}\qty[\qty(y_\text{test}^\text{new}-\langle \Gamma, H^\text{new}\rangle )^2].$$ This formula will be different depending on which asymptotic treatment of $k_2,k_0, k_1$ we are considering. 

\paragraph{Notation.} We will use the convention of row-wise vectorisation, and so have $$\text{Vec}(vu^\top) = v \otimes u \in \mathbb{R}^{\text{dim}(v)\times \text{dim}(u)}\,, \qquad \text{Vec}(vu^\top)\text{Vec}(vu^\top)^\top = (vv^\top)\otimes (\bm{u}\bm{u}^\top)$$

\begin{lemma}\label{thm:population_icl_risk_CONSTANT}
Suppose we have weak correlations, \textit{i.e.}, $k_2,k_0,k_1= \Theta(1).$ For $\task\sim \mathcal{N}(\bm{0},C)$, writing $\tr[C] = \tr(C)/d$, with $$\rho_1 \equiv \tr[C] + \rho\,, \qquad \rho_2 \equiv k_2\tr[C] + \rho$$ have \textbf{population ICL risk} \begin{align}\mathcal{E}_\mathrm{ICL}(\Gamma) = \rho + \tr[ C] -2\frac{1}{d}\tr(\Gamma A^\top) + \frac{1}{d}\tr(\Gamma B \Gamma^\top) + \frac{1}{d^2}\tr(\mathcal{T}\vecop(\Gamma)\vecop(\Gamma)^\top)
\end{align} for \begin{align*}
    A &\equiv \begin{bmatrix}
        C + \frac{1}{\alpha}k_0 \tr[C]I_d & \bm{0} 
    \end{bmatrix}\,, \qquad B \equiv \begin{bmatrix}
         C  + \frac{\rho_2}{\alpha} I_d & \bm{0} \\
        \bm{0} & \rho_1^2
    \end{bmatrix} \\
    \mathcal{T} &\equiv \frac{k_1\tr[ C] + \rho k_0}{\alpha^2}\vecop\qty(\begin{bmatrix}I & \bm{0}\end{bmatrix})\vecop\qty(\begin{bmatrix}I & \bm{0}\end{bmatrix})^\top \\
    &\quad+\; \frac{k_0}{\alpha}\qty(\vecop\qty(\begin{bmatrix}I & \bm{0}\end{bmatrix})\vecop\qty(\begin{bmatrix}C & \bm{0}\end{bmatrix})^\top + \vecop\qty(\begin{bmatrix}C & \bm{0}\end{bmatrix})\vecop\qty(\begin{bmatrix}I & \bm{0}\end{bmatrix})^\top)
\end{align*}
\end{lemma}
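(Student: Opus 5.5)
Expand the square: $\mathcal{E}_\mathrm{ICL}(\Gamma)=\E[y_\mathrm{test}^2]-2\E[y_\mathrm{test}\langle\Gamma,H\rangle]+\E[\langle\Gamma,H\rangle^2]$. Each piece is a low-order Gaussian moment. Use the representation $X=\frac{1}{\sqrt d}\sqrt K Z$ (with $\Sigma=I_d$), $\task\sim\mathcal N(0,C)$ independent of everything, and the conditional law $\q\,|\,X\sim\mathcal N(X^\top\vm,\tfrac{1}{d}(1-k_{-1})I_d)$. Keep only terms that survive as $d\to\infty$ with $\alpha=\ell/d$ fixed and $k_0,k_1,k_2=\Theta(1)$. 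The constant term is immediate: $\E[y_\mathrm{test}^2]=\E[\q^\top\task\task^\top\q]+\rho=\tr(C)/d+\rho=\tr[C]+\rho$.

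\textbf{Linear term.} Write $\E[y_\mathrm{test}H]=\E[\q\q^\top\task\,\bm{u}^\top]$, where $\bm{u}^\top=[\tfrac d\ell \y^\top X,\ \tfrac1\ell\y^\top\y]$ and $\y=X\task+\e$.
\begin{itemize}
\item The last column vanishes: it is odd in $\task$, and $\e$ has mean zero.
\item The first block is $\tfrac d\ell\E[\q\q^\top\task\task^\top X^\top X]$, a fourth moment in the Gaussian vector $(X,\q)$. Apply Wick/Isserlis over the three pairings.
\item Pairing $\q\q^\top$ with $X^\top X$ gives $\tfrac1d\cdot\tfrac{\tr K}{d}\cdot C\cdot\tfrac d\ell=C/d$.
\item The cross pairings use $\E[\q\x_a^\top]=(\vk_\mathrm{test})_aI/d$. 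One of them produces $\tfrac d\ell\sum_a(\vk_\mathrm{test})_a^2\,\tr(C)/d^3\cdot I$, i.e.\ $\tfrac{k_0}{\alpha}\tr[C]I/d$.
\item The remaining pairing gives a term of the form $k_0C/(\alpha d^2)$, which is subleading.
\end{itemize}
Multiplying by $d$ (the $\tfrac1d\tr(\Gamma A^\top)$ normalisation) yields $A$.

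\textbf{Quadratic term.} Write $\E[\langle\Gamma,H\rangle^2]=\tr(\E[\vecop H\,\vecop H^\top]\,\vecop\Gamma\vecop\Gamma^\top)$, where $\vecop H\vecop H^\top=(\q\q^\top)\otimes(\bm u\bm u^\top)$. This is the main obstacle: it is an eighth-order Gaussian moment in $(X,\q)$, combined with fourth moments of $\task$ and $\e$. I will organise it by first conditioning on $X$ and $\task,\e$, so that $\E[\q\q^\top\,|\,X]=X^\top\vm\vm^\top X+\tfrac1d(1-k_{-1})I$.

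\emph{Leading part.} Take $\E[\q\q^\top]\approx I/d$ independent of $\bm u$. This reproduces the uncorrelated computation of \cite{lu2025asymptotictheoryincontextlearning}, which has the form $\tfrac1d I\otimes B$ and hence contributes $\tfrac1d\tr(\Gamma B\Gamma^\top)$. The only modifications come from $K$:
\begin{itemize}
\item $\E[\tfrac{d^2}{\ell^2}X^\top\y\y^\top X]=C+\tfrac{1}{\alpha}(k_2\tr[C]+\rho)I+o(1)$. Here the $k_2$ arises from $\tr(K^2)/\ell^2$ in the pairing $\E[X^\top XCX^\top X]$. This is $\rho_2$.
\item $\tfrac1\ell\y^\top\y\to\rho_1$ concentrates, since its fluctuations are $O(\sqrt{k_2/\ell})$, giving the $\rho_1^2$ entry.
\item The off-diagonal blocks vanish by oddness.
\end{itemize}

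\emph{Correction $\mathcal T$.} The remaining contributions are those where $\q$ pairs with $X$ inside $\bm u$ rather than with itself. These are rank-one-like terms, so they enter at order $1/d^2$ against $\vecop\Gamma\vecop\Gamma^\top$.
\begin{itemize}
\item Pairing each $\q$ with an $X$ in $\bm u$, and pairing the two resulting $\task$'s and $\e$'s across, gives $\tfrac{1}{\alpha^2}(k_1\tr[C]+\rho k_0)\,\vecop[I\;\bm0]\vecop[I\;\bm0]^\top$. The $k_1=\vk^\top K\vk$ appears because the leftover $X$'s pair with one another through $K$.
\item Pairing one $\q$ with $X$ and the other $\q$ with an $X$ adjacent to $\task\task^\top$ gives the symmetric $\tfrac{k_0}{\alpha}$ cross terms with $[C\;\bm0]$.
\end{itemize}

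Finally, verify that every other Wick pairing is $O(d^{-1})$ smaller, using $k_0,k_1,k_2=\Theta(1)$; this is exactly where the weak-correlation assumption is used. Also check that the contributions from the label column and the $k_{-1}$ correction to the conditional variance are negligible. Bookkeeping the pairings with $\sqrt K Z$ as a diagram count, rather than enumerating them directly, should keep this step manageable.
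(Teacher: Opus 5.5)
Your proposal is correct and follows essentially the paper's route (Lemma~\ref{lemma:general_k_token_expectations_FINITE}): you Isserlis-expand $\E[y_\mathrm{test}^2]$, $\E[y_\mathrm{test}H]$ and $\E[\vecop(H)\vecop(H)^\top]$ using the conditional law of $\q$ given $X$, the $\q$--$\q$ contraction gives $\frac{1}{d}I_d\otimes B$, the two leading $\q$--$X$ contraction patterns give $\frac{1}{d^2}\mathcal{T}$, and the label-column pieces vanish by oddness once $\task$ is averaged (the paper keeps $\task$ fixed and averages at the end, which makes no difference). One small correction: the $k_{-1}$ deficit in the conditional variance is not negligible on its own; it is exactly restored by the self-contraction $\E[X^\top\vm\vm^\top X]=\frac{k_{-1}}{d}I_d$, which is why your unconditional leading term $\E[\q\q^\top]=I_d/d$ is exact (also, the intermediate factor in your linear term should read $\tr(C)/d^2$, not $\tr(C)/d^3$).
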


\section{Breakdown of the high-dimensional analysis for strong correlations}\label{sec:nonconcentration}

In the subsequent appendices (starting with Appendix~\ref{sec:actual_formula}), we will use random-matrix-theoretic techniques to derive an asymptotically-precise formula for $\mathcal{E}_\text{ICL}(\Gamma^*)$ in the high-dimensional limit. This analysis hinges on our ability to derive what are known as \emph{deterministic equivalents} for the high-dimensional behaviour of random matrices related to the optimal parameter matrix $\Gamma^{*}$ of our reduced linear attention model. Before we embark on the analysis, in this Appendix we derive a condition on the correlations $K$ that is necessary for the random matrix analysis to go through as in \citet{lu2025asymptotictheoryincontextlearning}. This condition is violated when the correlations are strong, which shows that this approach does not obviously extend to the strong correlation regime. 

We recall from \citet{lu2025asymptotictheoryincontextlearning} that we can write the optimal parameter matrix in vectorized form as
\begin{equation}\label{eq:gammastar_definition}\vecop(\Gamma^*) = \qty(\frac{n}{d}\lambda I_{d(d+1)} + \sum_{\mu=1}^n \vecop(H^\mu) \otimes \vecop(H^\mu))^{-1}\sum_{\mu=1}^n y_\text{test}^\mu \vecop(H^\mu).\end{equation}
Starting from this formula, a key ingredient of the high-dimensional analysis is (as we will see in the sequel), the assumption that quadratic forms 
\begin{align}
    \vecop(H^{\mu})^{\top} G \vecop(H^{\mu}),
\end{align}
for $G$ a possibly-random matrix that is independent of $H^{\mu}$, concentrate to their expectations in the high-dimensional limit. To gain some insight into whether this assumption is justified, we consider the test case $G=I$, which corresponds to considering the norms $\Vert \vecop(H^{\mu}) \Vert^2$. If the norms do not concentrate, then for the class of matrices $G$ that appear in the computation, neither should the more general quadratic forms.

To determine when the norms concentrate, suppress the index $\mu \in [n]$, and recall that we have $$H = \q\begin{bmatrix}
    \frac{d}{\ell}({X}\task + \e)^\top {X} & \frac{1}{\ell}({X}\task + \e)^\top({X}\task + \e)
\end{bmatrix}.$$ We will simplify our analysis here, as our goal is to focus on intuition, by ignoring the noise terms given by $\e$; these do not affect the argument. Thus we write $$\vecop(H) = \bm{v} \otimes \q + \text{ignored noise contribution}$$ where $$\bm{v} = \begin{bmatrix}
    \frac{d}{\ell}X^\top X \task \\
    \frac{1}{\ell} \task^\top X^\top X \task 
\end{bmatrix}.$$
We thus have that 
\begin{align*}
    \frac{1}{d^2} \Vert \vecop(H) \Vert^{2} 
    &= \frac{1}{d^2} \Vert \q \Vert^2 \Vert \vv \Vert^{2} 
    \\
    &= \frac{1}{d^2} \Vert \q \Vert^2 \left( \frac{d^2}{\ell^2} \task^{\top} X^{\top} X X^{\top} X \task + \frac{1}{\ell^2} (\task^{\top} X^{\top} X \task)^2 \right)
    \\
    &\overset{d}{=} \left( \frac{1}{d} \vm^{\top} K^{1/2} Z Z^{\top} K^{1/2} \vm + 2 \frac{\sqrt{1-k_{-1}}}{d} \bm{z}^{\top} Z^{\top} K^{1/2} \vm + \frac{1-k_{-1}}{d} \Vert \vm{z} \Vert^2 \right) \nonumber\\&\quad \times \left( \frac{1}{d^2 \ell^2} \task^{\top} Z^{\top} K Z Z^{\top} K Z \task + \frac{1}{d^4 \ell^2} (\task^{\top} Z^{\top} K Z \task)^2 \right), 
\end{align*} where $\overset{d}{=}$ means equivalent in distribution. Here $Z$ is an $\ell \times d$ matrix with i.i.d. $\mathcal{N}(0,1)$ entries and $\bm{z} \sim \mathcal{N}(0,I_d)$ for the query (independent of $Z$).

Upon analysis of each of these terms, which we omit for brevity, we find that the problematic term is 
\begin{align*}
    \frac{1}{d^2 \ell^2} \task^{\top} Z^{\top} K Z Z^{\top} K Z \task
    &\overset{d}{=}\Vert \task \Vert^{2} (Z^{\top} K Z Z^{\top} K Z)_{11}
    \\
    &\overset{d}{=} \frac{\Vert \task \Vert^{2} }{d} \frac{1}{\ell^2} \sum_{a,b=1}^{\ell} \kappa_{a} \kappa_{b} Z_{a 1} Z_{b 1} \frac{1}{d} \sum_{i=1}^{d}  Z_{a i} Z_{b i} \\
    &=     \frac{\Vert \task \Vert^{2} }{d} \frac{1}{d} \left( \frac{1}{\ell} \sum_{a=1}^{\ell} \kappa_{a} Z_{a 1}^{2} \right)^2  + \frac{\Vert \task \Vert^{2} }{d} \frac{1}{\ell^2} \sum_{a,b=1}^{\ell} \kappa_{a} \kappa_{b} Z_{a 1} Z_{b 1} \frac{1}{d} \sum_{i=2}^{d}  Z_{a i} Z_{b i}.
\end{align*} We have arrived at this expression by exploiting the fact that the distribution of the matrix $Z$ is invariant under both left and right rotation to perform two changes of basis: (1) we let $K$ have orthogonal eigendecomposition $K = O\, \text{diag}(\kappa_{1},\ldots,\kappa_{\ell}) \,O^{\top}$, and (2) we choose a basis such that $\task = ( \Vert \task \Vert, 0, \ldots, 0 )^{\top}$. 

Now we must analyse this expression to provide a condition on $K$, specifically its eigenvalues $\kappa_{1},\ldots,\kappa_{\ell}$, for when this term does or does not concentrate. We focus on the second term in the sum, supposing for simplicity that $\Vert \task\Vert^2/d=1$: $$A := \frac{1}{\ell^2} \sum_{a,b=1}^{\ell} \kappa_{a} \kappa_{b} Z_{a 1} Z_{b 1} \frac{1}{d} \sum_{i=2}^{d}  Z_{a i} Z_{b i}.$$ It has moments given by \begin{align*}
    \mathbb{E}[A]
    &= \frac{d-1}{d} \frac{1}{\ell^2} \sum_{a=1}^{\ell} \kappa_{a}^{2}\,,  \\
    \text{var}[A] &= \frac{2}{\ell^4 d^2} (d-1) \left( \sum_{a=1}^{\ell} \kappa_{a}^2 \right)^{2} + \frac{2}{\ell^4 d^2}(d^2-1) \sum_{a=1}^{\ell} \kappa_{a}^{4}\,,
\end{align*} and so the square of the coefficient of variation of $A$ is therefore
\begin{align*}
    \frac{\text{var}[A]}{\mathbb{E}[A]^2} 
    &=  2 \left[\frac{1}{d-1} + \frac{d^2-1}{(d-1)^2} \frac{\sum_{a=1}^{\ell} \kappa_{a}^{4}}{\left( \sum_{a=1}^{\ell} \kappa_{a}^2 \right)^{2}}  \right] \\
    &= 2 [1+\mathcal{O}(d^{-1})] \frac{\sum_{a=1}^{\ell} \kappa_{a}^{4}}{\left( \sum_{a=1}^{\ell} \kappa_{a}^2 \right)^{2}} + \mathcal{O}(d^{-1}).
\end{align*} We now notice that the $K$-dependence here $$\frac{\sum_{a=1}^{\ell} \kappa_{a}^{4}}{\left( \sum_{a=1}^{\ell} \kappa_{a}^2 \right)^{2}} = \frac{\tr(K^4)}{\tr(K^2)^2}$$ looks like some sort of participation ratio. Indeed if $K = I_\ell$ (uncorrelated case, we know this concentrates) then $$\frac{\tr(K^4)}{\tr(K^2)^2} = \frac{1}{\ell} \to 0,$$ while if $K = \bm{1}\bm{1}^\top$ (maximally correlated case, here $\x_1 = \ldots = \x_\ell$) then $$\frac{\tr(K^4)}{\tr(K^2)^2} = 1.$$ In the maximally-correlated case $K = \bm{1}\bm{1}^{\top}$, we see that $A = Z_{1 1}^2 \frac{1}{d} \sum_{i=2}^{d}  Z_{1 i}^2$ tends in distribution to a $\chi^2$ random variable with 1 degree of freedom (as $\frac{1}{d} \sum_{i=2}^{d}  Z_{1 i}^2 \to 1$ almost surely), which clearly does not concentrate. This gives us an intuitive heuristic for concentration of $\|\vecop(H)\|$ :
\begin{tcolorbox}[resultbox, title=Necessary condition for concentration]
Quadratic forms in $\vecop(H)$ converge to a deterministic high-dimensional limit only if
\begin{align}\label{eq:concentration-condition}
    \frac{\tr(K^4)}{\tr(K^2)^2} \to 0.
\end{align}
\end{tcolorbox}
For the exponential case $K_{ab} = \exp(-|a-b|/\xi)$, we can estimate $$\frac{\tr(K^4)}{\tr(K^2)^2} \sim \frac{\xi}{\ell}$$ at large $\ell$, which gives the condition we expected. We see that for correlations that persist on the same order as the context length, we do not have concentration of $\vecop(H)$, and thus we do not have concentration of $\mathcal{E}_\text{ICL}(\Gamma^*)$. An extension of great interest to us would be to analyse this nonconcentration for a wider range of $K$ choices, \textit{e.g.} how does this participation ratio behave for other non-exponential (or even non-Toeplitz) choices of $K$? \\

\noindent \textbf{Remark.} It is worth noting that the case of $k_2 = o(\ell)$ does not violate this condition \eqref{eq:concentration-condition}. Yet, the formula we shall derive will assume that $k_2 = \Theta(1)$. This might potentially be too restrictive of an assumption; indeed Figure \ref{fig:three_regimes} shows great match between simulation and our theory curve, so long as $k_1 = o(\ell)$ as well. We choose to study the $k_0,k_1,k_2 = \Theta(1)$ branch only as this is the branch where the test matrices $A,B,\mathcal{T}$ defined in Lemma \ref{thm:population_icl_risk_CONSTANT} have order 1 entries. Indeed, under numerical investigation, this population loss formula appears to be accurate for the $k_0,k_1,k_2 = o(\ell)$ case. However, we choose not to study this branch in this work due to the elements of $A,B,\mathcal{T}$ becoming extensive in $\ell$.

\section{Asymptotic formula for ICL error in the weak-correlations case}\label{sec:actual_formula}
Here we present a deterministic formula for $\mathcal{E}_\text{ICL}(\Gamma^*)$ in the weak-correlations case. Again, the optimal parameters are given by eq. (\ref{eq:gammastar_definition}). The explicit ridge $\lambda$ is a parameter that adds regularisation to this solution; it is common to take the ridgeless limit $\lambda \to 0$ in similar works. 

Here we have $n$ sample sequences, giving $n$ different $H^\mu \in \mathbb{R}^{d\times(d+1)}$ data embeddings, where the sequence length is of course $\ell + 1$ ($\ell$ main tokens and one query). We will also introduce a new parameter $k$ that controls the number of regression task vectors $\task$ that define the labels in these sequences. The phenomenology surrounding $k$ is not discussed in this work for space reasons, but we include it to best match previous work on this model for consistency \citep{lu2025asymptotictheoryincontextlearning, letey2025pretrain, raventos2023pretraining}. We sample $k$ task vectors $\task_j \iid \mathcal{N}(\bm{0}, I_d)$ and these $k$ vectors are distributed uniformly across the $n$ sequences. This serves to limit the ``task diversity'' of the training batch, allowing us to judge how much of the true task distribution $\mathcal{N}(\bm{0},I_d)$ the model is truly learning in-context. 

We consider the same asymptotic treatment as derived in \cite{lu2025asymptotictheoryincontextlearning}, namely $d \to \infty$ with $$\alpha:=\frac{\ell}{d}\,, \quad \tau:=\frac{n}{d^2}\,, \quad \kappa:= \frac{k}{d}$$ all taken to be $\Theta(1)$ as $d \to \infty$. 

An important quantity to define is the Stieltjes transform of the task sample covariance. This will be given by \begin{equation}
    \M_\kappa(z) := \lim_{d\to \infty, k = \kappa d} \frac{1}{d}\tr\qty(\qty(\frac{1}{k}\sum_{j=1}^k \task_j\task_j^\top + zI_d)^{-1}) 
\end{equation} We will also use $$\M_\kappa'(z):= \frac{\text{d}}{\text{d}z}\M_\kappa(z).$$ These quantities will appear throughout our formula.

The explicit ridge parameter $\lambda$ will be modulated by the fact that we have finite samples, and will appear in our formula through an \textit{effective} ridge defined implicitly by  $$\tilde{\lambda} \M_\kappa\qty(\tilde{\lambda} + \frac{\rho_2}{\alpha}) + \frac{\lambda\tau}{\tilde{\lambda}}= 1-\tau$$ 
We will sometimes write $\M := \M_\kappa(\sigma), \quad \M
':=\M'_\kappa(\sigma)$ for shorthand, where $$\sigma = \tilde{\lambda} + \frac{\rho_2}{\alpha}\,, \quad \tilde{\sigma} = \sigma - \frac{k_0}{\alpha}.$$

Finally, we are only considering the case where the sequential correlations defined by $K,\bm{k}_\text{test}$ are sufficiently weak, \textit{i.e.}, $$k_2:=\frac{1}{\ell}\tr(K^2),\quad k_0 := \bm{k}_\text{test}^\top\bm{k}_\text{test},\quad k_1 := \bm{k}^\top_\text{test}K\bm{k}_\text{test}$$ are all $\Theta(1)$ as $d,\ell\to\infty$. These terms will appear in the constants $$\rho_1 \equiv 1 + \rho\,, \quad \rho_2 \equiv k_2+ \rho\,, \quad \phi_1 \equiv \frac{1}{\alpha^2}(k_1 + \rho k_0)\,, \quad \phi_2 \equiv \frac{1}{\alpha}k_0.$$
We present the main formula below. All proofs are given in subsequent sections.

\begin{proposition}\label{prop:icl-error-formula-constant}
Consider isotropic tokens and tasks, \textit{i.e.}, $\Sigma = I_d = C$, with sequential correlations in the tokens given by PSD $K \in \mathbb{R}^{\ell \times \ell}$ and $\bm{k}_\mathrm{test} \in \mathbb{R}^\ell$ for $\bm{k}_\mathrm{test}^\top K^{-1}\bm{k}_\mathrm{test} \leq 1$. We then have that ICL error for a model defined by $\Gamma^*$ concentrates as $d\to\infty$ in the above limit, \textit{i.e.}, $$\mathcal{E}_\mathrm{ICL}(\Gamma^*) \simeq e_\mathrm{ICL}^\mathrm{corr}(\alpha, \kappa, \tau, \rho, k_0, k_1, k_2)$$ for 
 \begin{align*}
    e_\mathrm{ICL}^\mathrm{corr}(\alpha, \kappa, \tau, \rho, k_0, k_1, k_2) &= e_\mathrm{ICL}^\mathrm{uncorr}\qty(\frac{1+\rho}{k_2+\rho}\alpha, \kappa, \tau, \rho) + e_\mathrm{query}(\alpha, \kappa, \tau, \rho, k_0, k_1, k_2)
\end{align*} where 
\begin{align*}
    e_\mathrm{query} &= -2\phi_2
+ q_{\mathrm{query}}(\M+\tilde{\lambda}\M')
+ \phi_2(2\sigma-\phi_2)\M' + 2\phi_2\tilde{\sigma}\M \\
&\quad+\; \bm{m}_2^\top S^\top M\,S\bm{m}_2
- 2\bm{m}_2^\top S \bm{m}_3
+ 2(1+\phi_2)\bm{m}_1^\top S \bm{m}_2
+ (\phi_1+2\phi_2)\left(1-\tilde{\sigma}\M-\bm{m}_1^\top S \bm{m}_2\right)^2 \\
&\quad
+ \frac{\rho_2}{\alpha}\Bigl[
q_{\mathrm{query}}(\M+\tilde{\lambda}\M')
+ \phi_2(2\sigma-\phi_2)\M'
+ \bm{m}_2^\top S^\top M\,S\bm{m}_2
- 2\bm{m}_2^\top S \bm{m}_3
+ 2\phi_2\M
\Bigr].
\end{align*} for 
$$\quad \bm{m}_1 = \begin{bmatrix}
    \M \\
    \phi_2(1-\sigma \M)
\end{bmatrix}\,,\quad \bm{m}_2 = \begin{bmatrix}
    1-\tilde{\sigma}\M \\
    \phi_2(1 - \tilde{\sigma} + \sigma\tilde{\sigma}\M)
\end{bmatrix}\,, \quad \bm{m}_3 = \begin{bmatrix}
    \M + \tilde{\sigma}\M' \\
    \phi_2(1-\sigma\M - \tilde{\sigma}\M -\sigma\tilde{\sigma}\M')
\end{bmatrix}$$
$$M = \begin{bmatrix}
    -\M' & \phi_2(\M+ \sigma\M')\\
    \phi_2(\M+ \sigma\M') & \phi_2^2(1-2\sigma\M - \sigma^2\M')
\end{bmatrix}\,,\quad S = \begin{bmatrix}
    \M & 1+ \phi_2(1-\sigma \M) \\
    1+ \phi_2(1-\sigma \M) & -\phi_1 + \phi_2^2(1-\sigma + \sigma^2\M)
\end{bmatrix}^{-1}$$
\begin{align*} q_\mathrm{query} &= \frac{2\frac{k_0}{\alpha}\qty(\tilde{\lambda}(\M + \sigma\M')+(1-2\sigma\M)) + \frac{k_0^2}{\alpha^2}\qty(\M-\tilde{\lambda}\M') + \bm{m}_2^\top S\bm{m}_2 - \tilde{\lambda}\qty(- 2\bm{m}_2^\top S\bm{m}_3 + \bm{m}_2^\top S^\top MS\bm{m}_2)}{\tau - (1-2\tilde{\lambda}\M - \tilde{\lambda}^2\M')}.\end{align*}
\end{proposition}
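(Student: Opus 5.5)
The plan is to follow the deterministic-equivalent strategy of \citet{lu2025asymptotictheoryincontextlearning}, with Lemma~\ref{thm:population_icl_risk_CONSTANT} supplying the test-side statistics. Specialising that lemma to $C=I_d$ gives $\mathcal{E}_\mathrm{ICL}(\Gamma^*)$ as an explicit quadratic functional of $\Gamma^*$ through the deterministic matrices $A$, $B$ and $\mathcal{T}$. Each of these is built from $I_d$ and the single direction $\vecop([I\;\bm 0])$, with coefficients $\rho_1$, $\rho_2/\alpha$, $\phi_1$ and $\phi_2$. The whole problem therefore reduces to computing the limits of
\[
\tfrac1d\tr(\Gamma^* A^\top),\qquad \tfrac1d\tr(\Gamma^* B\Gamma^{*\top}),\qquad \tfrac1{d^2}\vecop([I\;\bm 0])^\top\vecop(\Gamma^*).
\]
These are linear and quadratic functionals of the resolvent $G(\lambda)=\bigl(\tfrac n d\lambda I+\sum_\mu \vecop(H^\mu)\vecop(H^\mu)^\top\bigr)^{-1}$ applied to $\sum_\mu y^\mu_\mathrm{test}\vecop(H^\mu)$.

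The first step is to compute, conditionally on the task $\task$, the first two moments of $\vecop(H)$ and the cross-moment $\E[y_\mathrm{test}\vecop(H)]$ under correlated tokens. I would use the representation $X=d^{-1/2}\sqrt K Z$ and $\E[\q\mid X]=X^\top\vm$. Wick's theorem then shows that, at leading order, correlations enter only through $\tr(K)/\ell=1$, $\tr(K^2)/\ell=k_2$, $\vk_\mathrm{test}^\top\vk_\mathrm{test}=k_0$ and $\vk_\mathrm{test}^\top K\vk_\mathrm{test}=k_1$. This is precisely the computation behind Lemma~\ref{thm:population_icl_risk_CONSTANT}, now carried out with $\task$ held fixed. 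The resulting conditional covariance has the structure ``(task-dependent block) $\otimes I_d$ plus a low-rank spike along $\vecop([I\;\bm 0])$ of strength $\phi_1,\phi_2$''. When $\vk_\mathrm{test}=\bm 0$ the spike vanishes, and the only change from the i.i.d.\ case is that the noise level $\rho_1/\alpha$ is replaced by $\rho_2/\alpha$. In the normalised variables this is equivalent to rescaling $\alpha\mapsto\alpha(1+\rho)/(k_2+\rho)$, and it produces the term $e^\mathrm{uncorr}_\mathrm{ICL}$.

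Next I would run the leave-one-out/Sherman--Morrison argument as in \citet{lu2025asymptotictheoryincontextlearning}. Concentration of the quadratic forms $\vecop(H^\mu)^\top G_{-\mu}\vecop(H^\mu)$ holds because $k_2=\Theta(1)$ forces $\tr(K^4)/\tr(K^2)^2\to0$, which is the necessary condition~\eqref{eq:concentration-condition}. The argument yields a deterministic equivalent of $G$ as a Kronecker-structured matrix $(\text{task-averaged block}+\sigma\text{-shift})\otimes I_d$, corrected by a rank-two perturbation in the $[I\;\bm 0]$-direction. The task average produces $\M_\kappa(\sigma)$, and the self-consistent equation produces the effective ridge equation for $\tilde\lambda$. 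I would then remove the rank-two spike exactly by Woodbury. This is where the $2\times2$ matrix $S$, the vectors $\bm m_1,\bm m_2,\bm m_3$ and the matrix $M$ arise: they are the projections of the unperturbed resolvent and its $\sigma$-derivative (the $\M'$ terms) onto the spike directions $\{\vecop([I\;\bm0]),\ \text{signal}\}$. The shifted parameter $\tilde\sigma=\sigma-k_0/\alpha$ comes from the $k_0$ correction in $A$.

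The last step is assembly. The quadratic term $\tfrac1d\tr(\Gamma^*B\Gamma^{*\top})$ requires a second-order deterministic equivalent of $G\,\Sigma_\mathrm{test}\,G$. As in the uncorrelated theory, this produces the denominator $\tau-(1-2\tilde\lambda\M-\tilde\lambda^2\M')$ via a fixed-point equation for the variance, and the $k_0$ and spike contributions collect into $q_\mathrm{query}$. Subtracting the $\vk_\mathrm{test}=\bm 0$ formula then isolates $e_\mathrm{query}$.

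I expect the main obstacle to be this second-order equivalent in the presence of the rank-two spike. The Woodbury correction must be propagated consistently through both the linear and quadratic terms, and one must check that the $O(1)$ spike contributions survive while the cross terms between the spike and the bulk fluctuations vanish. I would organise that bookkeeping by working throughout in the basis spanned by $\vecop([I\;\bm0])$ and its orthogonal complement.
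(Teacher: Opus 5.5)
Your plan follows the paper's proof step for step. The paper specialises Lemma~\ref{thm:population_icl_risk_CONSTANT} to $C=I_d$, computes task-conditional Wick moments with a rank-two spike, and builds a leave-one-out deterministic equivalent. It handles the source term and the quadratic functional through an extended resolvent and a $\pi$-derivative of its Schur complement, which is your second-order equivalent. It then applies Woodbury to generate $S,\bm{m}_i,M$. Your observation that at $\vk_\mathrm{test}=\bm{0}$ the formula sees $\alpha$ only through $\rho_2/\alpha$ is correct, and it is exactly what yields the $e^{\mathrm{uncorr}}_{\mathrm{ICL}}$ term.

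One step is justified incorrectly. You claim that $k_2=\Theta(1)$ forces $\tr(K^4)/\tr(K^2)^2\to 0$, and this is false. Take the equicorrelated kernel $K=(1-\epsilon)I_\ell+\epsilon\bm{1}\bm{1}^\top$ with $\epsilon=c/\sqrt{\ell}$. It is PSD with unit diagonal and $k_2\to 1+c^2$. However, it has an eigenvalue $\approx c\sqrt{\ell}$, so $\tr(K^4)/\tr(K^2)^2\to c^4/(1+c^2)^2>0$. Moreover, \eqref{eq:concentration-condition} is only a \emph{necessary} condition, so even when it holds it cannot certify that $\bm{z}_\mu^\top G_\mathrm{ext}^{[\mu]}\bm{z}_\mu$ concentrates. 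The paper's own argument takes this concentration for granted under weak correlations, so you lose nothing relative to it. You should, however, state concentration as an explicit hypothesis rather than derive it from $k_2=\Theta(1)$. A natural assumption to impose is $\|K\|_{\mathrm{op}}=O(1)$. It covers the exponential kernel at fixed $\xi$, and together with $\tr(K^2)\ge\ell$ it gives $\tr(K^4)/\tr(K^2)^2\le\|K\|_{\mathrm{op}}^2/\ell$.

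There are also two minor slips.
First, $\tilde\sigma$ comes from the $k_0$ shift in the \emph{training} cross-moment: $\tr[F\tilde R]=1-\tilde\sigma\M$ with $\tilde R=\Rtr+\tfrac{k_0}{\alpha}I_d$ and $F=(\Rtr+\sigma I_d)^{-1}$. The $k_0$ term in $A$ contributes only the prefactor $1+\phi_2$.
Second, the $\mathcal{T}$-term equals $(\phi_1+2\phi_2)\bigl(\tfrac1d\vecop([I_d\;\bm{0}])^\top\vecop(\Gamma^*)\bigr)^2$. The functional you need therefore carries $1/d$, not $1/d^2$, and it is just the linear term divided by $1+\phi_2$.
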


\section{Proofs of Population Averages}
We begin for necessary $y_\text{test}, H$ averages needed for terms in the population loss formula in Lemma \ref{thm:population_icl_risk_CONSTANT}. \textit{A language model was used at various stages of these proofs to help with the Wick expansions of the 6th order moments in terms of traces. All expressions were subsequently carefully verified by the authors both analytically and numerically.}

\begin{lemma}\label{lemma:general_k_token_expectations_FINITE}
Suppose we have $k_2, k_0, k_1 = \Theta(1)$ \textit{i.e.}, weak correlations. Write $\bm{v} = \Sigma\task.$ Then we have \begin{align*}
        \E[y_\mathrm{test}^2] &= \tr[\Sigma \task\task^\top] + \rho \\
        \E[y_\mathrm{test} H] &\approx \frac{1}{d}\begin{bmatrix}
        \bm{v}\bm{v}^\top + \frac{1}{\alpha}k_0\tr[\Sigma\task\task^\top]\Sigma & \qty(\tr[\Sigma \task\task^\top] + \rho)\bm{v}
    \end{bmatrix} \\
        \E[\mathrm{vec}(H)\mathrm{vec}(H)^\top] &\approx \frac{1}{d} \Sigma \otimes \begin{bmatrix}
        \bm{v}\bm{v}^\top + \qty(k_2\tr[\Sigma \task\task^\top] + \rho)\Sigma/\alpha & \qty(\tr[\Sigma \task\task^\top] + \rho)\bm{v} \\
        \qty(\tr[\Sigma \task\task^\top] + \rho)\bm{v}^\top & \qty(\tr[\Sigma \task\task^\top] + \rho))^2 
    \end{bmatrix} + \frac{1}{d^2}\mathcal{X}
    \end{align*}
for $d(d+1)\times d(d+1)$ tensor $\mathcal{X}$ given by \begin{align*}
\mathcal{X}
&=
\frac{1}{\alpha^2}\Bigl(k_1 \operatorname{tr}[\Sigma W] + \rho k_0\Bigr)\,
\operatorname{vec}\qty(\bigl[\Sigma \;\; \bm{0}\bigr])\operatorname{vec}\qty(\bigl[\Sigma \;\; \bm{0}\bigr])^{\top} \\
&\quad+\; \frac{k_0}{\alpha}
\Bigl(
\operatorname{vec}\qty(\bigl[\Sigma \;\; \bm{0}\bigr])\operatorname{vec}\qty(\bigl[S \;\;  \qty(\tr[\Sigma \task\task^\top] + \rho)\bm{v}\bigr])^{\top}
\\&\quad+
\operatorname{vec}\qty(\bigl[S \;\;  \qty(\tr[\Sigma \task\task^\top] + \rho)\bm{v}\bigr])\operatorname{vec}\qty(\bigl[\Sigma \;\; \bm{0}\bigr])^{\top}
\Bigr).
\end{align*}
The ``$\approx$'' means subleading terms in $\ell,d$ are neglected. 
\end{lemma}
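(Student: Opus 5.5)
We condition throughout on the task vector $\task$ and average only over $X$, $\q$ and the noise. We use the representation of the setup section: $X = \frac{1}{\sqrt d}K^{1/2}Z\Sigma^{1/2}$, and $\q$ is jointly Gaussian with the rows of $X$ with cross-covariance $\E[\q\x_a^\top]=(\bm{k}_\mathrm{test})_a\Sigma/d$. Every quantity in the statement is then a polynomial moment of a centred Gaussian vector, together with independent noise $\e,\varepsilon_\mathrm{test}$, so all three identities follow from Wick's theorem. The only real work is to organise the pairings by their order in $d,\ell$.

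For $\E[y_\mathrm{test}^2]$ we use $\E[\q\q^\top]=\Sigma/d$ and independence of $\varepsilon_\mathrm{test}$. For $\E[y_\mathrm{test}H]$ we compute the two blocks separately.
\begin{itemize}
\item The first block is $\frac{d}{\ell}\E[\q\q^\top\task\,(X\task+\e)^\top X]$. The noise drops out because $\e$ is independent and centred. The remaining term is a fourth-order Gaussian moment with three pairings. Pairing $\q$ with $\q$ and $X$ with $X$ gives $\frac{d}{\ell}\cdot\frac{\Sigma\task}{d}\cdot\frac{\tr(K)}{d}\task^\top\Sigma=\bm{v}\bm{v}^\top/d$, using $\tr(K)=\ell$. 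The two cross pairings each contract $\q$ against the rows of $X$ and produce $\sum_a(\bm{k}_\mathrm{test})_a^2=k_0$. One of them gives $\frac{1}{\alpha}k_0\tr[\Sigma\task\task^\top]\Sigma/d$. The other gives $\frac{k_0}{\ell d}\bm{v}\bm{v}^\top$, which is subleading.
\item The second block is $\frac1\ell\E[\q\q^\top\task\,\|X\task+\e\|^2]$. Its leading term is $(\tr[\Sigma\task\task^\top]+\rho)\bm{v}/d$. All cross pairings are suppressed by a factor $k_0/\ell$.
\end{itemize}

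For $\E[\vecop(H)\vecop(H)^\top]$ we write $\vecop(H)=\q\otimes\bm{u}$, where $\bm{u}^\top$ is the bracketed row vector in the definition of $H$. We then sort Wick pairings by the number of contractions between the two copies of $\q$ and the context rows: zero, one, or two.
\begin{itemize}
\item \emph{Zero cross contractions.} The expectation factorises as $\E[\q\q^\top]\otimes\E[\bm{u}\bm{u}^\top]=\frac{\Sigma}{d}\otimes\E[\bm{u}\bm{u}^\top]$.
\item Within $\E[\bm{u}\bm{u}^\top]$, the top-left block $\frac{d^2}{\ell^2}\E[X^\top(X\task+\e)(X\task+\e)^\top X]$ is evaluated by the Wishart-type Wick formula with row covariance $K$. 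The fully "diagonal" pairing gives $\bm{v}\bm{v}^\top$. The pairing that closes an inner loop through $K$ twice gives $\frac{d^2}{\ell^2}\cdot\frac{\tr(K^2)}{d^2}\tr[\Sigma\task\task^\top]\Sigma=\frac{k_2}{\alpha}\tr[\Sigma\task\task^\top]\Sigma$. The noise contributes $\frac{\rho}{\alpha}\Sigma$ with $\tr(K)=\ell$.
\item The off-diagonal and bottom-right blocks of $\E[\bm{u}\bm{u}^\top]$ reduce at leading order to products of first moments, by concentration of $\frac1\ell\|X\task+\e\|^2$ around $\tr[\Sigma\task\task^\top]+\rho$. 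Their fluctuation corrections are $O(k_2/\ell)$.
\item \emph{One cross contraction.} Contracting one $\q$ against a row of $X$ inside $\bm{u}$ costs a factor $k_0/\ell$ relative to the diagonal term. Together with the $\frac{d}{\ell}$ prefactor of the first block, this yields the $\frac{k_0}{\alpha d^2}$ rank-two terms. In these terms one side is $\vecop([\Sigma\;\;\bm{0}])$, coming from the $\q$--$X$ contraction that turns $\q(\cdot)^\top X$ into $\Sigma$. The other side is the leading mean of $H$ without the $\q$ factor, namely $\vecop([S\;\;(\tr[\Sigma\task\task^\top]+\rho)\bm{v}])$, where $S$ denotes the leading top-left block just computed. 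Both orderings occur, which gives the symmetrised sum.
\item \emph{Two cross contractions.} Both copies of $\q$ are contracted against context rows. The surviving $X$--$X$ pairing inserts $K$ between the two $\bm{k}_\mathrm{test}$ vectors, giving $k_1\tr[\Sigma W]$ with $W=\task\task^\top$. The $\e$--$\e$ pairing gives $\rho k_0$ instead. Both carry the prefactor $\frac{1}{\alpha^2 d^2}$ and the tensor structure $\vecop([\Sigma\;\;\bm{0}])\vecop([\Sigma\;\;\bm{0}])^\top$.
\end{itemize}

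At every stage we use $k_0,k_1,k_2=\Theta(1)$ and $\ell\propto d$ to discard pairings that carry extra factors of $1/\ell$ or $1/d$. These are exactly the terms the statement suppresses under ``$\approx$''.

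The main obstacle is the bookkeeping in the second-moment computation. It is a sixth-order moment in $(\q,X)$, eighth-order once the $X\task$ factors are expanded, with many pairings, and we must identify which of them survive at order $1/d$ versus $1/d^2$ in the vectorised tensor. Our approach is to index every pairing by the graph it forms on positions $a,b\in[\ell]$. Each closed loop through $K$ yields a trace, and each path that ends on $\bm{k}_\mathrm{test}$ yields $k_0$ or $k_1$. We then count powers of $\ell$ and $d$ per graph, and we verify the retained terms numerically against Monte Carlo estimates at moderate $d$.
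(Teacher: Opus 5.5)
Your route is a joint Wick expansion in $(\q,X)$, sorted by $\q$--$X$ contractions. The paper instead conditions on $X$ through $\E[\q\mid X]=X^\top K^{-1}\bm{k}_\mathrm{test}$ and cancels the $k_{-1}$ pieces. Your route is legitimate, and it avoids $K^{-1}$ altogether. Your treatment of $\E[y_\mathrm{test}^2]$, $\E[y_\mathrm{test}H]$, the factorised part $\frac{1}{d}\Sigma\otimes\E[\bm{u}\bm{u}^\top]$ and the $k_1\tr[\Sigma W]+\rho k_0$ term is right, up to one slip. The $K^2$-loop pairing equals $\frac{\tr(K^2)}{d^2}(\task^\top\Sigma\task)\Sigma$, not $\frac{\tr(K^2)}{d^2}\tr[\Sigma\task\task^\top]\Sigma$; only the former gives $\frac{k_2}{\alpha}$.

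The gap is in the $\frac{k_0}{\alpha}$ rank-two part of $\mathcal{X}$. There is no class of pairings with exactly one $\q$--$X$ contraction. $\q$ is independent of $\e$ and occurs exactly twice in $\vecop(H)\vecop(H)^\top$, so either $q_I$ pairs with $q_J$ or both pair with entries of $X$. Moreover, a single contraction $\E[q_IX_{ai}]=(\bm{k}_\mathrm{test})_a\Sigma_{Ii}/d$ can never produce $k_0=\sum_a(\bm{k}_\mathrm{test})_a^2$. The term actually arises inside the two-contraction class, as follows:
\begin{itemize}
\item both components of $\q$ hit the \emph{same row} $a$ of the \emph{same} copy $u_i=\frac{d}{\ell}\sum_a X_{ai}(X\task+\e)_a$;
\item $q_I$ lands on $X_{ai}$, giving $\Sigma_{Ii}$, and $q_J$ lands on $(X\task)_a$, giving $v_J$;
\item the other copy $u_j$ is replaced by its mean;
\item the mirror image $(I,i)\leftrightarrow(J,j)$ gives the transpose.
\end{itemize}
So the second factor is $\vecop(\bm{v}\,\E[\bm{u}]^\top)$ with $\E[\bm{u}]=(\bm{v},\tr[\Sigma\task\task^\top]+\rho)$, i.e.\ $S=\bm{v}\bm{v}^\top$. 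Its left $\bm{v}$ comes from the second $\q$-contraction, not from a ``mean of $H$'', which vanishes as an odd moment. If $S$ meant the full leading top-left block of $\E[\bm{u}\bm{u}^\top]$, you would pick up a spurious term proportional to $\frac{k_0}{\alpha^2}(k_2\tr[\Sigma W]+\rho)\vecop([\Sigma\;\;\bm{0}])\vecop([\Sigma\;\;\bm{0}])^\top$. That term is of the same order as the genuine $k_1$ term.

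Second, your selection rule would not reproduce the stated $\mathcal{X}$. The two-contraction class also contains pairings with exactly the same powers of $\ell,d$ and the same position graph as the retained ones. For example, $q_I$ on $X_{bj}$ and $q_J$ on $(X\task)_b$ inside $u_j$ gives $\frac{k_0}{\ell d}\Sigma_{Ij}v_iv_J$. Likewise, $q_I$ on $X_{bj}$ and $q_J$ on $X_{ai}$ gives $\frac{1}{\ell^2}k_1\tr[\Sigma W]\Sigma_{Ij}\Sigma_{Ji}$. Entrywise these are as large as the kept $\frac{k_0}{\ell d}\Sigma_{Ii}v_Jv_j$ and $\frac{1}{\ell^2}k_1\tr[\Sigma W]\Sigma_{Ii}\Sigma_{Jj}$. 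Counting powers of $\ell$ and $d$ on a graph over positions $a,b\in[\ell]$ therefore cannot discard them.

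They are negligible only in the sense that matters for the loss. Contracted against $\vecop(\Gamma)$ with $\Gamma$ of bounded operator norm, their feature indices do not close into $\tr(\Gamma\Sigma)$. They give, e.g., $\frac{1}{\ell^2}\tr(\Gamma\Sigma\Gamma\Sigma)=O(1/d)$ rather than $\frac{1}{\ell^2}\tr(\Gamma\Sigma)^2=O(1)$. You need to adopt this as the meaning of ``$\approx$'' and track the feature-index pattern of each pairing; the paper leaves this step implicit as well. Otherwise your bookkeeping yields a larger tensor than the one in the statement.
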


\begin{proof} Let's go term by term. 

\noindent \textcolor{Periwinkle}{Label-label term.} Conditioning on $X$, we have
\[
\mathbb{E}[\q\q^\top\mid X]= \frac{1}{d}(1-k_{-1})\Sigma+(X^\top\bm{m})(X^\top\bm{m})^\top
\] and so \begin{align*}
    \E[y_\mathrm{test}^2] &= \E[\tr(\q\q^\top \task\task^\top) + \rho] \\
    &= \frac{1}{d}(1-k_{-1})\E_X\qty[\tr(\Sigma \task\task^\top)] + \E_X\qty[\tr((X^\top\bm{m})(X^\top \bm{m})^\top\task\task^\top)] + \rho \\
    &= (1-k_{-1}+k_{-1})\tr[\Sigma \task\task^\top] + \rho.
\end{align*}

\noindent \textcolor{Periwinkle}{Label-feature term.} 
For \begin{align*}
    \bm{b} = X^\top(X\task+ \e)\,, \quad c = (X\task+ \e)^\top (X\task+ \e) 
\end{align*} we have $$y_\mathrm{test} H = \q(\q^\top \task + \varepsilon_\mathrm{test})\begin{bmatrix}
    \frac{d}{\ell}\bm{b}^\top & \frac{1}{\ell} c
\end{bmatrix}$$ and so \begin{align*}
    \E[y_\mathrm{test}H] &= \frac{1}{d}(1-k_{-1})\Sigma \task\, \begin{bmatrix}
    \frac{d}{\ell}\E_X\qty[\bm{b}] \\
    \frac{1}{\ell} \E_X\qty[c]
\end{bmatrix}^\top + \begin{bmatrix}
    \frac{d}{\ell}\E_X[(X^\top \bm{m})(X^\top \bm{m})^\top \task\, \bm{b}^\top ] &
    \frac{1}{\ell}\E_X[(X^\top \bm{m})(X^\top \bm{m})^\top \task\, c]
\end{bmatrix}
\end{align*}
Have \begin{align*}
    \frac{d}{\ell}\E_X\qty[\bm{b}] &= \tr[K]\bm{v} \\
    \frac{1}{\ell}\E_X[c] &= \tr[K]\tr[\Sigma\task\task^\top] + \rho \\
    \frac{d}{\ell}\E_X[(X^\top \bm{m})(X^\top \bm{m})^\top \task \bm{b}^\top] &= \frac{1}{d}\cdot \frac{1}{\ell}\qty(k_{-1}\tr(K) + k_0)\bm{v}\bm{v}^\top + \frac{1}{d}\cdot\frac{1}{\ell}k_0\tr(\Sigma \task\task^\top) \Sigma \\
    &\approx \frac{1}{d}\cdot\qty(k_{-1}\tr[K]\bm{v}\bm{v}^\top + \frac{1}{\alpha}k_0\tr[\Sigma \task\task^\top]\Sigma) \\
    \frac{1}{\ell}\E_X[(X^\top \bm{m})(X^\top \bm{m})^\top \task\, c] &= \frac{1}{d}\cdot \rho k_{-1}\bm{v} + \frac{1}{d}\cdot k_{-1}\tr[K]\tr[\Sigma \task \task^\top]\bm{v} + \frac{2}{d^2}\frac{k_0}{\ell}\bm{v}\bm{v}^\top \task  \\
    &\approx \frac{1}{d}\cdot k_{-1}\qty(\tr[K]\tr[\Sigma \task\task^\top] + \rho)\bm{v}
    \end{align*} 
Combining gives \begin{align*}
    \E[y_\mathrm{test}H] &= \frac{1}{d}\begin{bmatrix}
        \tr[K]\bm{v}\bm{v}^\top + \frac{1}{\alpha}k_0\tr[\Sigma\task\task^\top]\Sigma & \rho_1\bm{v}
    \end{bmatrix}
\end{align*}

\noindent \textcolor{Periwinkle}{Feature-feature term}
Using $\bm{b},c$ as above we can write more easily
\begin{align*}
    \text{Vec}(H)\text{Vec}(H)^\top &= (\q\q^\top) \otimes \qty(\begin{bmatrix}
    \frac{d}{\ell}\bm{b}\\ \frac{1}{\ell}c
\end{bmatrix} \begin{bmatrix}
    \frac{d}{\ell}\bm{b}\\ \frac{1}{\ell}c
\end{bmatrix}^\top)
\end{align*} 
Taking conditional expectation over $\q$ for fixed $X$, our expression simplifies as 
\[
 {
\mathbb{E}\!\left[\mathrm{Vec}(H)\mathrm{Vec}(H)^\top\right]
= \frac{1}{d}(1-k_{-1})\Sigma \otimes \mathbb{E}_{X,\epsilon}\!\left[\bm{u}\bm{u}^\top\right] + \mathbb{E}_{X,\epsilon}\!\left[(X^\top
\bm{m})(X^\top\bm{m})^\top\otimes(\bm{u}\bm{u}^\top)\right].
}
\] where I'm writing $\bm{u}^\top = \begin{bmatrix}
    \frac{d}{\ell}\bm{b}^\top & \frac{1}{\ell}c
\end{bmatrix}$ for convenience. Now let's start with the $\E[\bm{u}\bm{u}^\top]$ term. 

We have \begin{align*}
    \E[\bm{b}\bm{b}^\top] &= \E[X^\top X \task\task^\top X^\top X] + \E[X^\top \e\e^\top X] \\
    &= \frac{1}{d^2}\qty(\tr(K^2)+\tr(K)^2){\Sigma}\task\task^\top{\Sigma} + \frac{1}{d^2}\tr(K^2)\tr(\Sigma \task\task^\top){\Sigma} + \frac{1}{d}\rho \tr(K)\Sigma\\
    \E[c\bm{b}] &= \E[\task^\top X^\top X \task X^\top X \task] + 2\E[\e^\top X w X^\top \e] + \E[\e^\top \e X^\top X \task] \\
    &= \qty(\frac{1}{d^2}\qty(\tr(K)^2 + 2\tr(K^2))\tr(\Sigma\task\task^\top)+\frac{1}{d}\rho(\ell+2)\tr(K))\Sigma\task \\
    \E[c^2] &= \E[\task^\top X^\top X \task \task^\top X^\top X \task] + 2 \E[\task^\top X^\top X \task \e^\top \e] + 4\E[\task^\top X^\top \e \e^\top X \task] + \E[\e^\top\e\e^\top \e] \\
    &= \frac{1}{d^2}\qty(\tr(K)^2 + 2\tr(K^2))\tr(\Sigma\task\task^\top)^2 + \frac{1}{d}\rho(2\ell+4)\tr(K)\tr(\Sigma\task\task^\top) + \rho^2(\ell^2+2\ell).
\end{align*} 
This simplifies as \begin{align*}
    \frac{d^2}{\ell^2}\E[\bm{b}\bm{b}^\top] &\approx \tr[K]^2 (\Sigma\task)(\Sigma \task)^\top + \frac{1}{\alpha}\qty(\rho\tr[K]+\tr[K^2]\tr[\Sigma W])\Sigma \\
    \frac{d}{\ell^2}\E[c\bm{b}] &\approx \tr[K]\qty(\tr[K]\tr[\Sigma W] + \rho) \Sigma \task \\
    \frac{1}{\ell^2}\E[c^2] &\approx \qty(\rho + \tr[K]\tr[\Sigma W])^2
\end{align*} upon making the above high-dimensional assumptions.  

For the higher order term, $$\mathbb{E}\left[(X^\top
\bm{m})(X^\top\bm{m})^\top\otimes(\bm{u}\bm{u}^\top)\right] = \begin{bmatrix}
    \frac{d^2}{\ell^2}\E[(X^\top
\bm{m})(X^\top\bm{m})^\top\otimes\bm{b}\bm{b}^T] & \frac{d}{\ell^2}\E[(X^\top
\bm{m})(X^\top\bm{m})^\top\otimes c\bm{b}] \\
\frac{d}{\ell^2}\E[(X^\top
\bm{m})(X^\top\bm{m})^\top\otimes c\bm{b}^T] & \frac{1}{\ell^2}\E[(X^\top
\bm{m})(X^\top\bm{m})^\top\otimes c^2]
\end{bmatrix}$$
Write 
\[
\bm{v} := \Sigma \task,\qquad S:=\bm{v}\bm{v}^\top\,, \qquad 
k_{-1}\equiv \mathrm{tr}(KM) = \bm{k}_\mathrm{test}^\top K^{-1}\bm{k}_\mathrm{test},\quad
k_0 \equiv \bm{k}_\mathrm{test}^\top \bm{k}_\mathrm{test} ,\quad
k_1 \equiv  \bm{k}_\mathrm{test}^\top K\bm{k}_\mathrm{test}
\]

For large $\ell, d$ and the chosen high-dimensional assumptions, the Isserlis expansion gives \begin{align*}
\frac{d^2}{\ell^2} \, \mathbb{E}\Big[\big((X^\top m)(X^\top m)^\top\big)\otimes (\bm{b}\bm{b}^\top)\Big]_{(I,i),(J,j)}
&\approx \frac{1}{d}k_{-1}\qty( \Sigma_{IJ}S_{ij}\tr[K]^2 + \frac{1}{\alpha}
\qty(\tr[\Sigma W]\tr[K^2]+ \rho \tr[K])\Sigma_{IJ}\Sigma_{ij}) \\
&\quad+\; \frac{1}{d^2}\mathcal{X}_{(I,i)(J,j)}
\end{align*} with \begin{align*}
\mathcal{X}_{(I,i)(J,j)} &\equiv \frac{1}{\alpha^2}k_1\tr[\Sigma W] \Sigma_{Ii}\Sigma_{Jj} + \frac{1}{\alpha}\tr[K]k_0\qty(\Sigma_{Ii}S_{Jj}+\Sigma_{Jj}S_{Ii}) + \rho\frac{1}{\alpha^2}k_0\Sigma_{Ii}\Sigma_{Jj} \\
&= \frac{1}{\alpha^2}\qty(k_1\tr[\Sigma W] + \rho k_0)\Sigma_{Ii}\Sigma_{Jj} + \frac{1}{\alpha}k_0\tr[K]\qty(\Sigma_{Ii}S_{Jj}+\Sigma_{Jj}S_{Ii}).
\end{align*}
Now for the second term, \begin{align*}
    \E[(X^\top
\bm{m})(X^\top\bm{m})^\top\otimes c \bm{b}]_{(I,i),J} &= \E[c(X^\top \bm{m})_I(X^\top \bm{m})_J b_i] \\
&= \E[(X_{sk}w_k + \varepsilon_s)(X_{sn}w_n + \varepsilon_s) X_{tI}X_{pJ}M_{tp} X_{qi}(X_{qj}w_j + \varepsilon_q)] \\
&= M_{tp}W_{kn}w_j \E[X_{sk}X_{sn}X_{tI}X_{pJ}X_{qi}X_{qj}] + (\ell+2)\rho M_{tp}w_j \E[X_{tI}X_{pJ}X_{qi}X_{qj}] 
\end{align*}
Expanding all the Wick terms gives 
\[
{
\begin{aligned}
\frac{d}{\ell^2}\mathbb{E}\Big[\big((X^\top m)(X^\top m)^\top\big)\otimes (cb)\Big]_{(I,i),J}
&\approx \frac{1}{d}k_{-1}\qty(\tr[\Sigma W] + \rho )\Sigma_{IJ}v_i + \frac{1}{d^2}\mathcal{X}_{(I,i)(J,d+1)}
\end{aligned}}
\]
where $$\mathcal{X}_{(I,i)(J,d+1)} \equiv \frac{1}{\alpha}k_0\qty(\tr[\Sigma W] +\rho)\Sigma_{Ii}v_{J}.$$
Finally, the last term gives 
\[
{
\begin{aligned}
\frac{1}{\ell^2}\mathbb{E}\Big[((X^\top m)(X^\top m))\otimes c^2\Big]_{IJ}
&\approx \frac{1}{d}k_{-1}\qty(\tr[\Sigma W] + \rho)^2\Sigma_{IJ} 
\end{aligned}}
\]
Gathering everything together gives the required formula.
\end{proof}

\newpage

\section{Proof of Proposition \ref{prop:icl-error-formula-constant}}
This entire section is an extended calculation that serves as a complete proof of the deterministic formula for $\mathcal{E}_\text{ICL}(\Gamma^*)$ given in Proposition \ref{prop:icl-error-formula-constant}. The methodology in this section is to analyse the random object $\Gamma^*$, which contains randomness through the particular training batch $(H^\mu, y_\text{test}^\mu)_{\mu=1}^n$, as a resolvent of a larger random matrix, and the necessary $\tr(\Gamma A^\top)$ and $\tr(\Gamma^\top B \Gamma)$ terms as traces of a resolvent against deterministic test matrices. Throughout this entire section we will take $\Sigma = I_d$. 

This section will be relatively light on detail as the full calculation setup can be found in both \cite{lu2025asymptotictheoryincontextlearning} and \cite{letey2025pretrain}. For a more rigorous analysis, with error terms bounded properly, see \cite{lu2025asymptotictheoryincontextlearning}; here we will simply write $\approx$ when terms can be dropped due to the negligibility in high dimensions. 

We care about the MSE loss (computed by Theorem \ref{thm:population_icl_risk_CONSTANT}) for the optimal parameters 
\begin{align}\label{eq:gamma_star_actual_formula_appendix}
\text{vec}(\Gamma^\ast) =
\left(\frac{n}{d}\lambda I + \sum_{\mu=1}^{n}\text{vec}(H^\mu)\text{vec}(H^\mu)^\top\right)^{-1} \sum_{\mu=1}^{n}y_\mathrm{test}^{\mu}\text{vec}(H^\mu)\,
\end{align} for $\mu \in [n]$ denoting the index for the $n$ training sample sequences. Define $$ \bm{z}_\mu = \begin{bmatrix}y_\mathrm{test}^\mu / d \\ \vecop(H^\mu)/\sqrt d\end{bmatrix} \in \mathbb{R}^{d(d+1)+1}$$ and construct extended resolvent 
\begin{equation}\label{eq:Ge}
    G_\mathrm{ext}(\pi) = \frac{1}{\sum_{\mu \in [n]} \bm{z}_\mu \bm{z}_\mu^\top + \pi B_\mathrm{ext} + \tau\lambda I}\,.
\end{equation} for some test matrix $B_\mathrm{ext} \in \mathbb{R}^{(d(d+1)+1)\times(d(d+1)+1)}$. 

It's important to note that even though we're studying sequentially-correlated data, the vectors $\bm{z}_\mu$s are still independent of each other. We can thus apply the ``leave-one-out'' or cavity method here over the $\mu$ index. We end up with \begin{equation}\label{eq:Ge_loo_id}
    \sum_{\mu \in [n]} \frac{1}{1 + \bm{z}_\mu^\top G_\mathrm{ext}^{[\mu]} \bm{z}_\mu} G_\mathrm{ext}^{[\mu]} \bm{z}_\mu \bm{z}_\mu^\top + G_\mathrm{ext}  (\pi B_\mathrm{ext} + \tau\lambda I)= I.
\end{equation}
For fixed task $\task_\mu$ we can apply Lemma \ref{lemma:general_k_token_expectations_FINITE} to compute $$\E_{X,\q,\e}[\bm{z}\bm{z}^\top] = \frac{1}{d^2}\Upsilon(\task)$$ for 
\begin{align*}
\Upsilon(\task) &\equiv \begin{bmatrix}
        \tr[\task\task^\top] + \rho & \frac{1}{\sqrt{d}}\vecop\qty(\begin{bmatrix} \task\task^\top + \frac{1}{\alpha}k_0\tr[\task\task^\top]I_d& \rho_1(\task) \task\end{bmatrix}) \\
        \frac{1}{\sqrt{d}}\vecop\qty(\begin{bmatrix} \task\task^\top + \frac{1}{\alpha}k_0\tr[\task\task^\top]I_d & \rho_1(\task) \task\end{bmatrix})^\top &I_d \otimes E(\task) + \frac{1}{d}\mathcal{X}(\task)
    \end{bmatrix}
\end{align*}
where \begin{align*}
    E(\task) \equiv \begin{bmatrix}
            \task\task^\top + \rho_2(\task) I_d/\alpha &  \rho_1(\task) \task \\
             \rho_1(\task) \task^\top & \rho_1(\task)^2
        \end{bmatrix}\,, \quad \rho_1(\task)= \tr[ \task\task^\top] + \rho, \quad \rho_2(\task) = \tr[\task\task^\top]k_2 + \rho
\end{align*} 
and $d(d+1)\times d(d+1)$ tensor 
\begin{align*}
    \mathcal{X}(\task) &\equiv \frac{1}{\alpha^2}\qty(\tr[\task\task^\top]k_1 + \rho k_0)\vecop\qty(\begin{bmatrix}
        I_d & \bm{0}
    \end{bmatrix})\vecop\qty(\begin{bmatrix}
        I_d & \bm{0}
    \end{bmatrix})^\top \\
    &\quad+\; \frac{1}{\alpha} k_0\qty(\vecop\qty(\begin{bmatrix}
        I_d & \bm{0}
    \end{bmatrix})\vecop\qty(\begin{bmatrix}
        \task\task^\top & \bm{0}
    \end{bmatrix})^\top + \vecop\qty(\begin{bmatrix}
        \task\task^\top & \bm{0}
    \end{bmatrix})\vecop\qty(\begin{bmatrix}
        I_d & \bm{0}
    \end{bmatrix})^\top) \\
    &\quad+\; \frac{1}{\alpha}\rho_1(\task) k_0\qty(\vecop\qty(\begin{bmatrix}
        I_d & \bm{0}
    \end{bmatrix})\vecop\qty(\task\bm{e}_{d+1}^\top)^\top + \vecop\qty(\task\bm{e}_{d+1}^\top)\vecop\qty(\begin{bmatrix}
        I_d & \bm{0}
    \end{bmatrix})^\top).
\end{align*}
The quadratic form $\bm{z}_\mu^\top G_\mathrm{ext}^{[\mu]} \bm{z}_\mu$ concentrates for fixed tasks $\task_\mu$ (assuming weak correlations, so that the condition noted in Appendix~\ref{sec:nonconcentration} is not violated), so we have 
\begin{equation}
    \bm{z}_\mu^\top G_\mathrm{ext}^{[\mu]} \bm{z}_\mu \simeq \chi^\mu(\bm{w}_\mu) 
\end{equation}
where
\begin{equation}\label{eq:zmu_quadratic}
    \chi^\mu(\bm{w}_\mu) \equiv \frac{1}{d^2} \tr\left(\left[G_\mathrm{ext}^\mu\right]_{\setminus 0} \cdot \left[I \otimes E(\bm{w}_\mu) + \frac{1}{d}\mathcal{X}(\task_\mu)\right]\right).
\end{equation}
Replacing $\bm{z}_\mu^\top G_\mathrm{ext}^{[\mu]} \bm{z}_\mu$ in (\ref{eq:Ge_loo_id}) with $\chi^\mu(\bm{w}_\mu)$ gives
\begin{equation}\label{eq:Ge_loo_id_quadratic}
    \sum_{\mu \in [n]} \frac{1}{1 + \chi^\mu(\bm{w}_\mu)} G_\mathrm{ext}^{[\mu]} \bm{z}_\mu \bm{z}_\mu^\top + G_\mathrm{ext}  (\pi B_\mathrm{ext} + \tau\lambda I) \simeq I\,.
\end{equation}
In this equation we will also replace $\bm{z}_\mu\bm{z}_\mu^\top$ with its conditional expectation over $X,\q,\e$, giving 
\begin{align}\label{eq:Ge_loo_id_expectation}
    \frac{\tau}{n}\sum_{\mu \in [n]} \frac{1}{1 + \chi^\mu(\bm{w}_\mu)} G_\mathrm{ext}^{[\mu]} \Upsilon(\task_\mu) + G_\mathrm{ext}  (\pi B_\mathrm{ext} + \tau\lambda I) \simeq I 
\end{align}
where recall $\tau = n/d^2$. 

In high dimensions and for large $n$, there is negligible difference between $\sum_{\nu\neq\mu}$ and $\sum_\mu$. Thus, we replace $G_\mathrm{ext}^\mu$ by $G_\mathrm{ext}$, and $\chi^\mu(\bm{w}_\mu)$ by
\begin{equation}\label{eq:zmu_quadratic_full}
    \chi(\bm{w}_\mu) \equiv \frac{1}{d^2} \tr\left(\left[G_\mathrm{ext}\right]_{\setminus 0} \cdot \left[I \otimes E(\bm{w}_\mu) + \frac{1}{d}\mathcal{X}(\task_\mu)\right]\right).
\end{equation}
So finally we have the expression for $G_\mathrm{ext}$
\begin{align}
    G_\mathrm{ext}\left(\frac{\tau}{n}\sum_{\mu \in [n]} \frac{1}{1 + \chi(\bm{w}_\mu)} \Upsilon(\task_\mu) + \pi B_\mathrm{ext} + \tau\lambda I\right) \simeq I \label{eq:Ge_equiv_0} \,.
\end{align}

\noindent \textcolor{Periwinkle}{\textit{Exploit finiteness of training task set.}} So far we are summing over $n$ task vectors, but really only $n/k$ of these are unique. Thus, we can simplify (\ref{eq:Ge_equiv_0}) as 
\begin{equation}\label{eq:Ge_equiv_1}
    G_\mathrm{ext}\left(\frac{\tau}{k}\sum_{j \in [k]} \frac{1}{1 + \chi(\task_j)} \Upsilon(\task_j) + \pi B_\mathrm{ext} + \tau\lambda I\right) \simeq I\,.
\end{equation}
We replace $\chi(\task_j)$, which is self-averaging in $\task_j$, with its mean \begin{equation}\label{eq:chi_n}
    \widehat\chi_\mathrm{ave} \equiv \frac{1}{k} \sum_{j \in [k]} \chi(\task_j).
\end{equation} To clean up the sums over the tasks $\task_1,...\task_k$ we use the fact that \begin{align*}
    \frac{1}{k}\sum_{j\in[k]} \tr[\task_j \task_j^\top] &=  \tr[\Rtr] \\
    \frac{1}{k} \sum_{j\in[k]} \left(\rho+ \tr[\task_j \task_j^\top]\right)\task_j &\simeq\left(\rho +  \tr[\Rtr]\right)\btr \\
    \frac{1}{k}\sum_{j\in[k]} \left(\rho +  \tr[\task_j \task_j^\top]\right)^2 &\simeq\left(\rho + \tr[\Rtr]\right)^2 
\end{align*} for $$
    \btr \equiv \frac{1}{k}\sum_{j\in[k]}\task_j\,, \qquad 
    \Rtr \equiv \frac{1}{k}\sum_{j\in[k]}\task_j\task_j^\top. $$
We thus have that \begin{align*}
    \frac{1}{k}\sum_{j\in[k]} E(\task_j) &\approx B_\mathrm{tr} \equiv \begin{bmatrix}
        \Rtr + \rho_2I_d/\alpha & \rho_1 \btr \\
        \rho_1 \btr^\top & \rho_1^2 
    \end{bmatrix} \\
    \frac{1}{k}\sum_{j\in[k]}\mathcal{X}(\task_j) &\approx \Phi
\end{align*}
for \begin{align*}
    \Phi &\equiv \frac{1}{\alpha^2}\qty(\tr[\Rtr]k_1 + \rho k_0)\vecop\qty(\begin{bmatrix}
        I_d & \bm{0}
    \end{bmatrix})\vecop\qty(\begin{bmatrix}
        I_d & \bm{0}
    \end{bmatrix})^\top \\
    &\quad+\; \frac{1}{\alpha} k_0\qty(\vecop\qty(\begin{bmatrix}
        I_d & \bm{0}
    \end{bmatrix})\vecop\qty(\begin{bmatrix}
        \Rtr & \bm{0}
    \end{bmatrix})^\top + \vecop\qty(\begin{bmatrix}
        \Rtr & \bm{0}
    \end{bmatrix})\vecop\qty(\begin{bmatrix}
        I_d & \bm{0}
    \end{bmatrix})^\top) \\
    &\quad+\; \frac{1}{\alpha}\rho_1 k_0\qty(\vecop\qty(\begin{bmatrix}
        I_d & \bm{0}
    \end{bmatrix})\vecop\qty(\btr\bm{e}_{d+1}^\top)^\top + \vecop\qty(\btr\bm{e}_{d+1}^\top)\vecop\qty(\begin{bmatrix}
        I_d & \bm{0}
    \end{bmatrix})^\top)
\end{align*}
where $$\rho_1 = \tr[\Rtr]+\rho\,,\qquad \rho_2 = \tr[K^2]\tr[\Rtr] + \rho.$$
Finally, we have 
\begin{equation}\label{eq:chi_ave}
    \widehat\chi_\mathrm{ave} = \frac{1}{d^2} \tr\left(\left[G_\mathrm{ext}\right]_{\setminus 0} \cdot \left[I \otimes B_\mathrm{tr} + \frac{1}{d}\Phi\right]\right).
\end{equation} 

Thus, after averaging over $X,\q,\e$ in the extended resolvent $G_\mathrm{ext}$, we have a deterministic equivalent $G_\mathrm{ext} \simeq \G_\mathrm{ext}$ (still depending on random task quantities $\Rtr,\btr$) defined by self-consistent equations
\begin{align}
    \left[\G_\mathrm{ext}\right]_{\setminus 0} &= \qty(\frac{\tau}{1+\chi_\pi}I_d\otimes B_\mathrm{tr} + \frac{1}{d}\frac{\tau}{1+\chi_\pi}\Phi + \pi \Pi + \tau \lambda I_d \otimes I_{d+1})^{-1} \label{eq:full_tensor_Gext0}\\
    \chi_\pi &= \frac{1}{d^2}\tr\qty(\left[\G_\mathrm{ext}\right]_{\setminus 0} \qty(I_d \otimes B_\mathrm{tr} + \frac{1}{d}\Phi)) \label{eq:full_tensor_chipi}
\end{align}
where the full matrix is given by \begin{align}\label{eq:G_equiv_hat}
    \G_\mathrm{ext}(\pi)^{-1} &\equiv \frac{\tau}{1+\chi_\pi}  \begin{bmatrix}
    \rho_1 & \frac{1}{\sqrt d} \vecop\left(\begin{bmatrix}\Rtr + \frac{1}{\alpha}k_0\tr[\Rtr]I_d &  \rho_1\btr\end{bmatrix}\right)^\top\\
    \frac{1}{\sqrt d} \vecop\left(\begin{bmatrix}\Rtr + \frac{1}{\alpha}k_0\tr[\Rtr]I_d &  \rho_1\btr\end{bmatrix}\right) & I_d \otimes B_\mathrm{tr} + \frac{1}{d}\Phi
    \end{bmatrix} \nonumber\\
    &\quad+\; \pi B_\mathrm{ext} + \tau\lambda I
\end{align}

\textit{Note that up until this point, the results have matched the previous analysis from \cite{lu2025asymptotictheoryincontextlearning} and \cite{letey2025pretrain} with the exception of $\rho_2$ (depending on $k_2$) and the low-rank term $\Phi$ (depending on $k_0$ and $k_1$).}

The low-rank term is given by 
\begin{align*}
    \Phi &= \phi_1 1_+ 1_+^\top + \phi_2(1_+ r^\top + r 1_+^\top) + \phi_3(1_+  \mu^\top + \mu 1_+^\top) \\
    &= \begin{bmatrix}
        1_+ & v
    \end{bmatrix}\begin{bmatrix}
        \phi_1 & 1 \\
        1 & 0
    \end{bmatrix}\begin{bmatrix}
        1_+^\top  \\
        v^\top
    \end{bmatrix}  \\
    &= T \phi T^\top 
\end{align*} 
for $$1_+ \equiv \vecop\qty(\begin{bmatrix} I_d & \bm{0}\end{bmatrix})\,, \qquad r \equiv \vecop\qty(\begin{bmatrix} \Rtr & \bm{0}\end{bmatrix})\,, \qquad \mu \equiv \vecop\qty( \btr \bm{e}_{d+1}^\top)\,, \qquad v \equiv \vecop\qty(\begin{bmatrix}
    \phi_2\Rtr & \phi_3\btr
\end{bmatrix})$$
and $$\phi_1 \equiv \frac{1}{\alpha^2}(\tr[\Rtr]k_1 + \rho k_0)\,, \qquad \phi_2 \equiv \frac{1}{\alpha}k_0\,, \qquad \phi_3 \equiv \frac{1}{\alpha}(\tr[\Rtr] +\rho)k_0.$$

\subsection{Effective ridge}
The effective ridge will be $\tilde{\lambda} = \lambda(1+\chi_0)$ where $\chi_0$ is defined as the solution to the implicit equations
\begin{align}
\left[\G_\mathrm{ext}\right]_{\setminus 0} &= \qty(\frac{\tau}{1+\chi_0}I_d\otimes B_\mathrm{tr} + \frac{1}{d}\frac{\tau}{1+\chi_0}\Phi + \tau \lambda I_d \otimes I_{d+1})^{-1} \\
    \chi_0 &= \frac{1}{d^2}\tr\qty(\left[\G_\mathrm{ext}\right]_{\setminus 0} \qty(I_d \otimes B_\mathrm{tr} + \frac{1}{d}\Phi)).
\end{align}
We can recast the above implicit equations using Woodbury, as  \begin{align}\label{eq:selfconslambda}
    \frac{\tau \chi_0}{1+\chi_0} = \frac{1}{d^2}\tr\qty(\qty(I_d \otimes F_0 - \frac{1}{d}\Phi_C)\qty(I_d \otimes B_\mathrm{tr} + \frac{1}{d}\Phi))
\end{align} where \begin{align*}
    F_0 &= \qty(B_\mathrm{tr}+ \lambda(1+\chi_0)I_{d+1})^{-1} \\
    \Phi_C &= (I_d \otimes F_0)\begin{bmatrix}
        1_+ & v
    \end{bmatrix}S\begin{bmatrix}
        1_+^\top \\
        v^\top
    \end{bmatrix}(I_d \otimes F_0) \qquad \text{for some $2\times 2$ matrix $S$ with $\Theta(1)$ elements.}
\end{align*}
However, all the $1/d$ terms in \eqref{eq:selfconslambda} are subleading, e.g. \begin{align*}
    \frac{1}{d^3}\tr\qty((I_d\otimes F_0)rr^\top(I_d \otimes F_0B_\mathrm{tr})) &= \frac{1}{d^3}\tr(\Rtr F_0 B_\mathrm{tr} F_0 \Rtr) = \mathcal{O}\qty(\frac{1}{d^2})
\end{align*} and so we write 
\begin{align}\label{eq:simplified_lambdatilde} \frac{\tau \chi_0}{1+\chi_0}\approx \frac{1}{d^2}\tr\qty(I_d\otimes F_0B_\mathrm{tr}) = \tr[(B_\mathrm{tr} + \tilde{\lambda}I_{d+1})^{-1}B_\mathrm{tr}].\end{align}
This can be simplified as 
\begin{align*}
    \frac{\tau\chi_0}{1+\chi_0} &= 1-\lambda(1+\chi_0)\tr[\qty(B_\mathrm{tr} + \lambda(1+\chi_0) I_{d+1})^{-1}] \\
    &\approx 1-\lambda(1+\chi_0)\tr[\qty(\Rtr + \qty(\lambda(1+\chi_0)+\frac{\rho_2}{\alpha}) I_{d})^{-1}]
\end{align*} after ignoring the $\btr$ terms in $B_\mathrm{tr}$. This is where the Stieltjes transform of $\Rtr$ is first introduced: $$\M_\kappa(w) = \lim_{d\to \infty, k\to\infty, k/d = \kappa} \frac{1}{d}\tr\qty(\qty(\Rtr + w I_d)^{-1}).$$ Using this, we find the effective ridge self-consistency equation 
\begin{equation}\label{eq:tildelambda}
    \tilde{\lambda} \M_\kappa\qty(\tilde{\lambda} + \frac{\rho_2}{\alpha}) + \frac{\lambda\tau}{\tilde{\lambda}}= 1-\tau
\end{equation}
\subsection{Relating parameters to resolvent}
Recall that $$
\text{vec}(\Gamma^\ast) =
\left(\frac{n}{d}\lambda I + \sum_{\mu=1}^{n}\text{vec}(H^\mu)\text{vec}(H^\mu)^\top\right)^{-1} \sum_{\mu=1}^{n}y_\mathrm{test}^{\mu}\text{vec}(H^\mu)$$ and so we have 

\begin{align*}
    G_\mathrm{ext}(0) = d \begin{bmatrix}
        \frac{1}{d}\sum_\mu (y^\mu_\mathrm{test})^2 + \frac{n}{d}\lambda  & \frac{1}{\sqrt{d}}\sum_\mu y^\mu_\mathrm{test}\vecop(H^\mu)^\top \\
        \frac{1}{\sqrt{d}}\sum_\mu y^\mu_\mathrm{test}\vecop(H^\mu) & \sum_\mu \vecop(H^\mu)\vecop(H^\mu)^\top + \frac{n}{d}\lambda I
    \end{bmatrix}^{-1}
\end{align*}
Using that $$\begin{bmatrix}
    a & \bm{b}^\top \\
    \bm{b} & D
\end{bmatrix}^{-1} = \begin{bmatrix}
    c & -c\bm{q}^\top \\
    -c\bm{q} & D^{-1} + c\bm{q}\bm{q}^\top
\end{bmatrix}\,, \qquad c = \frac{1}{a-\bm{b}^\top D^{-1} \bm{b}}\,, \quad \bm{q} = D^{-1}\bm{b}$$ we have a formula for $\Gamma^*$ in terms of $G_\mathrm{ext}(0)$ as \begin{align*}
    \frac{\sqrt{d}}{\bm{e}_1^\top G_\mathrm{ext}(0)\bm{e}_1}G_\mathrm{ext}(0)\bm{e}_1 = \begin{bmatrix}
        \sqrt{d} \\
        \vecop(\Gamma)^*
    \end{bmatrix}
\end{align*}
Using our above formulas, we also have \begin{align*}
    \frac{\sqrt{d}}{\bm{e}_1^\top \G_\mathrm{ext}(0)\bm{e}_1}\G_\mathrm{ext}(0)\bm{e}_1 &= \begin{bmatrix}
        \sqrt{d} \\
        \qty(I_d\otimes B_\mathrm{tr} + \frac{1}{d}\Phi + \tilde{\lambda}I_{d(d+1)})^{-1}\vecop\qty(\begin{bmatrix}
            \Rtr + \frac{k_0}{\alpha}\tr[\Rtr]I_d & \rho_1\btr
        \end{bmatrix})
    \end{bmatrix}
\end{align*}
and so $$\vecop(\Gamma^*) \simeq \vecop(\Gamma_\mathrm{de}) \equiv \qty(I_d\otimes B_\mathrm{tr} + \frac{1}{d}\Phi + \tilde{\lambda}I_{d(d+1)})^{-1}\vecop\qty(\begin{bmatrix}
            \Rtr + \frac{k_0}{\alpha}\tr[\Rtr]I_d & \rho_1\btr
        \end{bmatrix}) = F_\Phi \bm{g}$$
for \begin{align*}
    F_\Phi &= \qty(I_d\otimes B_\mathrm{tr} + \frac{1}{d}\Phi + \tilde{\lambda}I_{d(d+1)})^{-1} \\
    \bm{g} &= \vecop\qty(\begin{bmatrix}
            \Rtr + \frac{k_0}{\alpha}\tr[\Rtr]I_d & \rho_1\btr
        \end{bmatrix}).
\end{align*}
This immediately gives the linear term in the MSE error expression as $$\tr[\Gamma^* A^\top] = \frac{1}{d}\tr(\Gamma^* A^\top) = \frac{1}{d}\tr(\vecop(\Gamma^*)\vecop(A)^\top) \simeq \frac{1}{d}\tr(\vecop(\Gamma_\mathrm{de})\vecop(A)^\top)$$ where we will specifically use 
$\vecop(A) = \qty(1+\phi_2)1_+$ from Lemma \ref{thm:population_icl_risk_CONSTANT}.

For the quadratic terms we need to be a bit more careful, as the correct object to work with for high-dimensional equivalence is technically the components of $\G$ and not this linear representation of $\vecop(\Gamma_\mathrm{de}).$ This is what the parameterisation $\pi B_\mathrm{ext}$ is for. Take $$B_\mathrm{ext} = \begin{bmatrix}
    0 & 0 \\
    0 & \Pi 
\end{bmatrix}$$ for some $\Pi \in \R^{d(d+1)\times d(d+1)}.$ Then $$\frac{\text{d}}{\text{d}\pi}\frac{1}{c(\pi)}(\pi=0) = \frac{1}{d}\vecop(\Gamma^*)^\top \Pi \vecop(\Gamma^*)$$ where $c(\pi) = \bm{e}_1^\top G_\mathrm{ext}(\pi)\bm{e}_1$. We can safely replace $$c(\pi) = \bm{e}_1^\top G_\mathrm{ext}(\pi)\bm{e}_1 \simeq \bm{e}_1\G_\mathrm{ext}(\pi)\bm{e}_1.$$ 

By Schur complement on $\G_\mathrm{ext}(\pi)$ we have \begin{align*}
    \frac{1}{\bm{e}_1\G_\mathrm{ext}(\pi)\bm{e}_1} &= \frac{\tau}{1+\chi_\pi}\rho_1+\tau\lambda - \frac{1}{d}\frac{\tau^2}{(1+\chi_\pi)^2}\bm{g}^\top\qty(\frac{\tau}{1+\chi_\pi}\qty(I_d\otimes B_\mathrm{tr}+\frac{1}{d}\Phi) + \pi\Pi + \tau\lambda I)^{-1}\bm{g} \\
    &=\frac{\tau}{1+\chi_\pi}\rho_1+\tau\lambda - \frac{1}{d}\frac{\tau}{1+\chi_\pi}\bm{g}^\top \qty(I_d\otimes B_\mathrm{tr} +\frac{1}{d}\Phi + \pi\frac{1+\chi_\pi}{\tau} \Pi +\lambda(1+\chi_\pi)I)^{-1}\bm{g}.
\end{align*}
Given the eventual MSE term we want from Lemma \ref{thm:population_icl_risk_CONSTANT}, we choose $$\Pi = I_d\otimes B + \frac{1}{d}\Psi$$ for 
$$\Psi = \begin{bmatrix}
    1_+ & \vecop\qty(\begin{bmatrix} I_d & \bm{0}\end{bmatrix})
\end{bmatrix} \begin{bmatrix}
        \phi_1 & \phi_2 \\
        \phi_2 & 0 
    \end{bmatrix} \begin{bmatrix}
        1_+^\top \\
        \vecop\qty(\begin{bmatrix} I_d & \bm{0}\end{bmatrix})^\top
    \end{bmatrix} = (\phi_1 + 2\phi_2)1_+1_+^\top. $$
Using the same approximation as before, we will take $$\left[\G_\mathrm{ext}\right]_{\setminus 0} = I_d\otimes \qty(\frac{\tau}{1+\chi_\pi}B_\mathrm{tr} + \pi B + \tau\lambda I)^{-1} + \frac{1}{d}\text{ low rank terms from $\Phi$ and $\Psi$}$$ and approximate $\chi_\pi$ as $$\chi_\pi = \tr\qty[\qty(\frac{\tau}{1+\chi_\pi}B_\mathrm{tr} + \pi B + \tau\lambda I)^{-1}B_\mathrm{tr}].$$ As before, we find that $$\frac{\tau\chi_0'}{(1+\chi_0)^2} = \frac{\tr[F_0BF_0B_\mathrm{tr}]}{\tr[F_0B_\mathrm{tr}F_0B_\mathrm{tr}]-\tau}\,, \qquad F_0 = (B_\mathrm{tr} + \tilde{\lambda}I)^{-1}.$$
Thus, \begin{align*}
    \frac{1}{d}\vecop(\Gamma^*)^\top \Pi \vecop(\Gamma^*) \simeq \frac{1}{d}\vecop(\Gamma_\mathrm{de})^\top \Pi \vecop(\Gamma_\mathrm{de}) - \frac{\tau \chi_0'}{(1+\chi_0)^2}\qty(\rho_1 - \frac{1}{d}\bm{g}^\top\vecop(\Gamma_\mathrm{de}) - \tilde{\lambda}\frac{1}{d}\vecop(\Gamma_\mathrm{de})^\top\vecop(\Gamma_\mathrm{de})).
\end{align*}

Note that \begin{align*}
    \vecop(\Gamma_\mathrm{de}) &= (I_d\otimes F_0)\bm{g} - \frac{1}{d}\Phi_C\bm{g} \\
    \Pi &= I_d \otimes B_\mathrm{test} + \frac{1}{d}\Psi
\end{align*} where \begin{align*}
    \Phi_C &= \qty((I_d \otimes F_0)T)\begin{bmatrix}
    \M & 1+ \phi_2(1-\sigma \M) \\
    1+ \phi_2(1-\sigma \M) & -\phi_1 + \phi_2^2(1-\sigma + \sigma^2\M)
\end{bmatrix}^{-1}\qty((I_d\otimes F_0)T)^\top \\
&= (I_d\otimes F_0)\begin{bmatrix}
    1_+ &\phi_2 r+ \phi_3 \mu
\end{bmatrix} S\begin{bmatrix}
    1_+^\top \\
    \phi_2 r^\top + \phi_3\mu^\top
\end{bmatrix}(I_d\otimes F_0)
\end{align*}
This comes from expanding \begin{align*}
F_0 &= \begin{bmatrix}
    F & \bm{f} \\
    \bm{f}^\top & f
\end{bmatrix} \\
\frac{1}{d}\begin{bmatrix}
        1_+^\top \\
        r^\top \\
        \mu^\top 
    \end{bmatrix}\begin{bmatrix}
        1_+ & r & \mu
    \end{bmatrix} &= \begin{bmatrix}
        \tr[F] & \tr[F\Rtr] &  \tr[\bm{f}\btr^\top] \\
        \tr[F\Rtr] & \tr[F\Rtr^2] & \tr[F\bm{f}\btr^\top ] \\
        \tr[\bm{f}\btr^\top] & \tr[F\bm{f}\btr^\top ] & f\tr[\btr\btr^\top]
    \end{bmatrix} \\
    &\simeq \begin{bmatrix}
        \M_\kappa(\sigma) & 1-\sigma \M_\kappa(\sigma) & 0 \\
        1- \sigma\M_\kappa(\sigma) & 1-\sigma +\sigma^2\M_\kappa(\sigma) & 0 \\
        0 & 0 & 0
    \end{bmatrix}\,, \qquad \sigma \equiv \frac{\rho_2}{\alpha} + \tilde{\lambda}.
\end{align*} 
As we can see, all the $\btr,\bm{f}$ contributions are negligible compared to the resolvent $(\Rtr + \sigma I_d)^{-1}$ contributions. Using similar intuition, in the expansions of the linear and quadratic error terms, we will neglect the $\rho_1 \btr$ component of $\bm{g}$, as well as $\mu = \vecop(\btr\bm{e}_{d+1}^\top)$ in $\Phi_C$.

\paragraph{Linear error term} Following this, and writing the matrix component of $\bm{g}$ as $$\tilde{R} = \Rtr + \frac{k_0}{\alpha}\tr[\Rtr]I_d$$ 
we have \begin{align*}
    \frac{1}{d}\vecop(A)^\top \vecop(\Gamma_\mathrm{de}) &= \qty(1+\phi_2)\frac{1}{d}\qty(1_+^\top (I\otimes F)\bm{g} - \frac{1}{d}1_+^\top\Phi_C\bm{g}) \\
    &\simeq \qty(1+\phi_2)\qty(\tr[F \tilde{R}] - \begin{bmatrix}
        \tr[F] & \phi_2\tr[F\Rtr]
    \end{bmatrix}S\begin{bmatrix}
        \tr[F\tilde{R}] \\
        \phi_2 \tr[\Rtr F\tilde{R}]
    \end{bmatrix})
\end{align*}

\paragraph{Quadratic error term} Recall $$B = \begin{bmatrix}
    \qty(1+\frac{\rho_2}{\alpha})I_d & \bm{0} \\
    \bm{0}^\top & \rho_1^2
\end{bmatrix}\,,\qquad \Psi = (\phi_1+2\phi_2)1_+1_+^\top.$$
Have 
$$\frac{1}{d}\bm{g}^\top \vecop(\Gamma_\mathrm{de}) \approx \tr[\tilde{R}F\tilde{R}] - \begin{bmatrix}
        \tr[\tilde{R}F] & \phi_2\tr[\tilde{R}F\Rtr]
    \end{bmatrix}S\begin{bmatrix}
        \tr[F\tilde{R}] \\
        \phi_2 \tr[\Rtr F \tilde{R}]
    \end{bmatrix} $$
\begin{align*}\frac{1}{d}\vecop(\Gamma_\mathrm{de})^\top \vecop(\Gamma_\mathrm{de}) &\approx \tr[\tilde{R}F^2\tilde{R}] - 2\begin{bmatrix}
        \tr[\tilde{R}F] &
        \phi_2 \tr[\tilde{R}F\Rtr] 
    \end{bmatrix}S\begin{bmatrix}
        \tr[F^2\tilde{R}] \\
        \phi_2 \tr[\Rtr F^2 \tilde{R}]
    \end{bmatrix} \\
    &\quad+\; \begin{bmatrix}
        \tr[\tilde{R}F] & \phi_2\tr[\tilde{R} F \Rtr]
    \end{bmatrix}S^\top \begin{bmatrix}
        \tr[F^2] & \phi_2 \tr[F^2\Rtr]\\
        \phi_2\tr[F^2\Rtr] & \phi_2^2\tr[\Rtr F^2 \Rtr]
    \end{bmatrix}S \begin{bmatrix}
        \tr[F\tilde{R}]\\
        \phi_2\tr[\Rtr F \tilde{R}]
    \end{bmatrix} 
\end{align*}
\begin{align*}\frac{1}{d}\vecop(\Gamma_\mathrm{de})^\top (I_d\otimes B) \vecop(\Gamma_\mathrm{de}) &\approx \qty(1+\frac{\rho_2}{\alpha}) \cdot \frac{1}{d}\vecop(\Gamma_\mathrm{de})^\top \vecop(\Gamma_\mathrm{de}) \\
    \frac{1}{\phi_1+2\phi_2}\frac{1}{d^2}\vecop(\Gamma_\mathrm{de})^\top \Psi \vecop(\Gamma_\mathrm{de}) &\approx \tr[\tilde{R}F]^2 - 2\tr[\tilde{R}F]\begin{bmatrix}
        \tr[F] &
        \phi_2\tr[\Rtr F]
    \end{bmatrix}S\begin{bmatrix}
        \tr[F\tilde{R}] \\
        \phi_2\tr[\Rtr F \tilde{R}]
    \end{bmatrix} \\
    &\quad+\; \qty(\begin{bmatrix}
        \tr[F] & \phi_2\tr[F\Rtr]
    \end{bmatrix}S\begin{bmatrix}
        \tr[F\tilde{R}]\\
        \phi_2\tr[\Rtr F \tilde{R}]
    \end{bmatrix})^2.
\end{align*}
The final step is to remember that $F = (\Rtr + \sigma)^{-1}$ and $\tilde{R} = \Rtr + k_0I_d/\alpha$ and so we have the following dictionary of terms \begin{align*}
    \tr[F] &\simeq \M_\kappa(\sigma) \\
    \tr[F\Rtr] &\simeq 1-\sigma\M_\kappa(\sigma)\\
    \tr[F\tilde{R}] &\simeq 1-\sigma\M_\kappa(\sigma)+\frac{k_0}{\alpha}\M_\kappa(\sigma)\\
    \tr[F\Rtr \tilde{R}] &\simeq \frac{k_0}{\alpha}(1-\sigma\M_\kappa(\sigma)) + 1-\sigma+\sigma^2\M_\kappa(\sigma)\\
    \tr[F \tilde{R}\tilde{R}] &\simeq \qty(\frac{k_0}{\alpha})^2\M_\kappa(\sigma) + 2\frac{k_0}{\alpha}(1-\sigma\M_\kappa(\sigma))) + 1-\sigma+\sigma^2\M_\kappa(\sigma)\\
    \tr[F^2] &\simeq -\M'_\kappa(\sigma)\\
    \tr[F^2 \Rtr] &\simeq \M_\kappa(\sigma) + \sigma \M'_\kappa(\sigma)\\
    \tr[F^2\tilde{R}] &\simeq \M_\kappa(\sigma) + \sigma \M'_\kappa(\sigma) - \frac{k_0}{\alpha}\M'_\kappa(\sigma)\\
    \tr[F^2 \Rtr \Rtr] &\simeq 1-2\sigma\M_\kappa(\sigma) - \sigma^2\M'_\kappa(\sigma)  \\
    \tr[F^2\Rtr \tilde{R}] &\simeq 1-2\sigma\M_\kappa(\sigma) - \sigma^2\M'_\kappa(\sigma) + \frac{k_0}{\alpha}(\M_\kappa(\sigma) + \sigma\M'_\kappa(\sigma)) \\
    \tr[F^2\tilde{R}\tilde{R}] &\simeq 1-2\sigma\M_\kappa(\sigma) - \sigma^2\M'_\kappa(\sigma) + 2\frac{k_0}{\alpha}(\M_\kappa(\sigma) + \sigma\M'_\kappa(\sigma)) - \qty(\frac{k_0}{\alpha})^2\M'_\kappa(\sigma)
\end{align*}

So finally, if we use some shorthand $$\tilde{\sigma} = \sigma - \frac{k_0}{\alpha}\,,\quad \bm{m}_1 = \begin{bmatrix}
    \M \\
    \phi_2(1-\sigma \M)
\end{bmatrix}\,,\quad \bm{m}_2 = \begin{bmatrix}
    1-\tilde{\sigma}\M \\
    \phi_2(1 - \tilde{\sigma} + \sigma\tilde{\sigma}\M)
\end{bmatrix}\,, \quad \bm{m}_3 = \begin{bmatrix}
    \M + \tilde{\sigma}\M' \\
    \phi_2(1-\sigma\M - \tilde{\sigma}\M -\sigma\tilde{\sigma}\M')
\end{bmatrix}$$
$$M = \begin{bmatrix}
    -\M' & \phi_2(\M+ \sigma\M')\\
    \phi_2(\M+ \sigma\M') & \phi_2^2(1-2\sigma\M - \sigma^2\M')
\end{bmatrix}$$
and so \begin{align*}
    \frac{1}{d}\vecop(A)^\top \vecop(\Gamma_\mathrm{de}) &\simeq (1+\phi_2)\qty(1-\tilde{\sigma}\M - \bm{m}_1^\top S \bm{m}_2) \\
    \frac{1}{d}\bm{g}^\top\vecop(\Gamma_\mathrm{de}) &\simeq 1+\sigma -2\tilde{\sigma}+ \tilde{\sigma}^2\M -\bm{m}_2^\top S\bm{m}_2 \\
    \frac{1}{d}\vecop(\Gamma_\mathrm{de})^\top \vecop(\Gamma_\mathrm{de}) &\simeq 1-2\tilde{\sigma}\M - \tilde{\sigma}^2\M' - 2\bm{m}_2^\top S\bm{m}_3 + \bm{m}_2^\top S^\top MS\bm{m}_2 \\
    \frac{1}{d^2}\vecop(\Gamma_\mathrm{de})^\top \Psi \vecop(\Gamma_\mathrm{de}) &\simeq (\phi_1+2\phi_2)\qty((1-\tilde{\sigma}\M) -\bm{m}_1^\top S \bm{m}_2)^2
\end{align*} with $$c = \frac{\tau\chi_0'}{(1+\chi_0)^2} \simeq (1+\frac{\rho_2}{\alpha})\frac{\M + \tilde{\lambda}\M'}{1-2\tilde{\lambda}\M - \tilde{\lambda}^2\M' - \tau}.$$

So finally, we have \begin{align*}
    e_\mathrm{ICL} &= (1-c)\rho_1 - 2(1+\phi_2)(1-\tilde{\sigma}\M - \bm{m}_1^\top S \bm{m}_2) + (\phi_1+2\phi_2)(1-\tilde{\sigma}\M - \bm{m}_1S\bm{m}_2)^2 \\
    &\quad+\; \qty(1+\frac{\rho_2}{\alpha}+c\tilde{\lambda})(1-2\tilde{\sigma}\M - \tilde{\sigma}^2\M' - 2\bm{m}_2^\top S\bm{m}_3 + \bm{m}_2^\top S^\top MS\bm{m}_2) \\
    &\quad+\; c(1+\sigma -2\tilde{\sigma}+ \tilde{\sigma}^2\M -\bm{m}_2^\top S\bm{m}_2) 
\end{align*} for 
$$\rho_1 \equiv 1 + \rho\,, \qquad \rho_2 \equiv \mathrm{tr}[K^2] + \rho $$
$$\sigma = \tilde{\lambda} + \frac{\rho_2}{\alpha}\,, \quad \tilde{\sigma} = \sigma - \frac{k_0}{\alpha}\,, \quad     \tilde{\lambda} \M_\kappa\qty(\tilde{\lambda} + \frac{\rho_2}{\alpha}) + \frac{\lambda\tau}{\tilde{\lambda}}= 1-\tau\,, \quad c =  (1+\frac{\rho_2}{\alpha})\frac{\M + \tilde{\lambda}\M'}{1-2\tilde{\lambda}\M - \tilde{\lambda}^2\M' - \tau}$$
$$\quad \bm{m}_1 = \begin{bmatrix}
    \M \\
    \phi_2(1-\sigma \M)
\end{bmatrix}\,,\quad \bm{m}_2 = \begin{bmatrix}
    1-\tilde{\sigma}\M \\
    \phi_2(1 - \tilde{\sigma} + \sigma\tilde{\sigma}\M)
\end{bmatrix}\,, \quad \bm{m}_3 = \begin{bmatrix}
    \M + \tilde{\sigma}\M' \\
    \phi_2(1-\sigma\M - \tilde{\sigma}\M -\sigma\tilde{\sigma}\M')
\end{bmatrix}$$
$$M = \begin{bmatrix}
    -\M' & \phi_2(\M+ \sigma\M')\\
    \phi_2(\M+ \sigma\M') & \phi_2^2(1-2\sigma\M - \sigma^2\M')
\end{bmatrix}\,,\quad S = \begin{bmatrix}
    \M & 1+ \phi_2(1-\sigma \M) \\
    1+ \phi_2(1-\sigma \M) & -\phi_1 + \phi_2^2(1-\sigma + \sigma^2\M)
\end{bmatrix}^{-1}$$
$$\phi_1 \equiv \frac{1}{\alpha^2}(k_1 + \rho k_0)=\phi_1\,, \quad \phi_2 \equiv \frac{1}{\alpha}k_0=\phi_2$$

\noindent \textbf{Sanity check}. For $K = I_d$ have $\rho_1=\rho_2$, $\sigma = \tilde{\sigma}$, $\phi_1=\phi_2=\phi_1=\phi_2=0$, and all $\bm{m},S,M$ terms do not contribute. Get \begin{align*}
    e_\mathrm{ICL} &= 1+ \rho - 2(1-\sigma \M) + \qty(1+\frac{1+\rho}{\alpha})(1-2\sigma \M - \sigma^2 \M') \\
    &\quad-\; c(\rho + \sigma-\sigma^2\M-\tilde{\lambda}(1-2\sigma\M - \sigma^2\M')) \\
    &=  1+ \rho - 2(1-\sigma \M) + \qty(1+\frac{1+\rho}{\alpha})(1-2\sigma \M - \sigma^2 \M') + \qty(1+ \frac{1+\rho}{\alpha})(\M + \tilde{\lambda}\M')q  \\
    &= \rho + q\M + (\tilde{\lambda}q-\sigma^2)\M' + \frac{1+\rho}{\alpha}(1-(q-2\sigma)\M + (\tilde{\lambda}q - \sigma^2)\M')
\end{align*} as previously in \cite{letey2025pretrain}. 

\subsection{Dependence on query terms} This formula can be cleaned up a fair bit by gathering terms that depend on $k_0, k_1$ \textit{i.e.}, the query correlation terms. First we write all the terms depending on $c$ as \begin{align*}
    &(-c)\qty(\rho_1 - \tilde{\lambda}\qty(1-2\tilde{\sigma}\M - \tilde{\sigma}^2\M' - 2\bm{m}_2^\top S\bm{m}_3 + \bm{m}_2^\top S^\top MS\bm{m}_2) - \qty(1+\sigma -2\tilde{\sigma}+ \tilde{\sigma}^2\M -\bm{m}_2^\top S\bm{m}_2)) \\
    &\quad=\; \qty(1+\frac{\rho_2}{\alpha})\frac{\M + \tilde{\lambda}\M'}{\tau - (1-2\tilde{\lambda}\M - \tilde{\lambda}^2\M')}\qty(\rho + \sigma - \sigma^2\M - \tilde{\lambda}(1-2\sigma\M - \sigma^2\M')) \\
    &\quad\quad-\;\qty(1+\frac{\rho_2}{\alpha})\frac{\M + \tilde{\lambda}\M'}{\tau - (1-2\tilde{\lambda}\M - \tilde{\lambda}^2\M')}\qty(2\frac{k_0}{\alpha}\qty(\tilde{\lambda}(\M + \sigma\M')+(1-2\sigma\M)) + \frac{k_0^2}{\alpha^2}\qty(\M-\tilde{\lambda}\M')) \\
    &\quad\quad+\;\qty(1+\frac{\rho_2}{\alpha})\frac{\M + \tilde{\lambda}\M'}{\tau - (1-2\tilde{\lambda}\M - \tilde{\lambda}^2\M')}\qty(\bm{m}_2^\top S\bm{m}_2 - \tilde{\lambda}\qty(- 2\bm{m}_2^\top S\bm{m}_3 + \bm{m}_2^\top S^\top MS\bm{m}_2)) \\
    &\quad=\; \qty(1+\frac{\rho_2}{\alpha})(\M + \tilde{\lambda}\M')(q_\text{old} + q_\text{query})
\end{align*}
and the remaining ICL error terms can be expressed as \begin{align*}
    &\rho_1 -2(1-\sigma\M) + \qty(1+\frac{\rho_2}{\alpha})(1-2\sigma\M - \sigma^2\M') \\
    &\quad-\; \frac{k_0}{\alpha}\qty(2\M -2\qty(1+\frac{\rho_2}{\alpha})(\M +\sigma\M')) - \frac{k_0^2}{\alpha^2}\qty(1+\frac{\rho_2}{\alpha})\M' \\
    &\quad-\; 2\phi_2(1-\tilde{\sigma}\M) + 2(1+\phi_2)\bm{m}_1^\top S\bm{m}_2 \\
    &\quad+\; \qty(1+\frac{\rho_2}{\alpha})(\bm{m}_2^\top S^\top MS\bm{m}_2 - 2\bm{m}_2^\top S \bm{m}_3) \\
    &\quad+\; (\phi_1+2\phi_2)(1-\tilde{\sigma}\M - \bm{m}_1^\top S\bm{m_2})^2.
\end{align*}
Thus we can write \begin{align*}
    e_\text{ICL}(k_0,k_1) &= e_\text{ICL}(\text{independent query}) -2\phi_2
+ q_{\mathrm{query}}(\M+\tilde{\lambda}\M')
+ \phi_2(2\sigma-\phi_2)\M' \\
&\quad+\; \bm{m}_2^\top S^\top M\,S\bm{m}_2
- 2\bm{m}_2^\top S \bm{m}_3
+ 2\phi_2\tilde{\sigma}\M + 2(1+\phi_2)\bm{m}_1^\top S \bm{m}_2
\\
&\quad+\; (\phi_1+2\phi_2)\left(1-\tilde{\sigma}\M-\bm{m}_1^\top S \bm{m}_2\right)^2 \\
&\quad
+ \frac{\rho_2}{\alpha}\Bigl[
q_{\mathrm{query}}(\M+\tilde{\lambda}\M')
+ \phi_2(2\sigma-\phi_2)\M'
+ \bm{m}_2^\top S^\top M\,S\bm{m}_2
- 2\bm{m}_2^\top S \bm{m}_3
+ 2\phi_2\M
\Bigr].
\end{align*}

\end{document}